\documentclass{article}

\usepackage[preprint]{neurips_2026}

\usepackage[utf8]{inputenc}
\usepackage[T1]{fontenc}
\usepackage{hyperref}
\usepackage{url}
\usepackage{microtype}
\usepackage{graphicx}
\IfFileExists{figures/tv_joint_cot_vs_noncot_grid.png}{\graphicspath{{figures/}}}{\graphicspath{{paper_results_full_repro_20260427_seed1234/figures/}}}
\usepackage{subcaption}
\usepackage{booktabs}
\usepackage{placeins}
\usepackage{float}
\usepackage{amsmath}
\usepackage{amssymb}
\usepackage{amsfonts}
\usepackage{mathtools}
\usepackage{amsthm}
\usepackage{xcolor}
\usepackage{tcolorbox}
\tcbuselibrary{breakable}
\usepackage{soul}
\usepackage[capitalize,noabbrev]{cleveref}

\tcbset{
  myexamplebox/.style={
    colback=blue!10!purple!10!white,
    colframe=white,
    boxrule=0pt,
    fontupper=\small,
    sharp corners,
    left=0.45em, right=0.45em, top=0.35em, bottom=0.35em,
    before skip=0.45em,
    after skip=0.45em,
    breakable,
    parbox=false,
  }
}

\theoremstyle{plain}
\newtheorem{theorem}{Theorem}[section]
\newtheorem{claim}{Claim}[section]

\newtheorem{corollary}[theorem]{Corollary}
\theoremstyle{definition}
\newtheorem{definition}[theorem]{Definition}
\newtheorem{assumption}[theorem]{Assumption}
\theoremstyle{remark}
\newtheorem{remark}[theorem]{Remark}

\title{Measuring Behavior Portability in Large Language Models}

\author{%
  Tianjia Dong \\
  Knowledge Lab \\
  University of Chicago \\
  \And
  Nadav Kunievsky \\
  Knowledge Lab \\
  University of Chicago \\
  \And
  James A. Evans \\
  Knowledge Lab \\
  Data Science Institute \\
  Department of Sociology \\
  University of Chicago
}

\begin{document}

\maketitle

\begin{abstract}
Large language models are increasingly deployed as autonomous decision makers, yet the behavioral mapping they exhibit can vary substantially across decision environments that are payoff-equivalent by construction—environments that share identical payoff-relevant structure but differ in surface presentation. This sensitivity renders suite-based evaluation fragile and raises a fundamental question of behavioral portability: how well does a behavioral mapping learned in one decision environment informative on another that preserves the same underlying incentive structure? We introduce a formal framework to measure this property. Our protocol fits an interpretable behavioral model on data pooled from a set of source environments and evaluates its out-of-sample predictive performance in a held-out target environment, benchmarking against an oracle trained directly on target data. Portability is quantified via a loss-agnostic measure that delivers worst-case bounds on the performance of the induced prediction-action mapping in the target environment. In controlled experiments spanning seven canonical economic decision problems, we document substantial and systematic portability losses, suggesting that behavioral characterizations of LLMs obtained in one decision environment cannot be assumed to transfer reliably to structurally equivalent alternatives.
\end{abstract}
\section{Introduction}

Large language models (LLMs) are increasingly deployed as \emph{decision makers}: given an input, they induce a distribution over actions---a recommendation, a choice, a tool call, or a prediction. Their appeal as delegates lies in flexibility: a decision problem typically interleaves features that should drive the decision with surrounding context that should not. For instance, an LLM tasked with evaluating loan applications receives candidate descriptions that combine payoff-relevant risk parameters---income, credit history, debt-to-income ratio---with payoff-irrelevant content such as biographical detail or narrative tone. That flexibility, however, makes evaluation brittle: an evaluator can probe only finitely many decision environments, raising a basic question: \emph{how portable is an LLM's behavior across decision environments that share the same payoff-relevant structure?} 

This question is consequential for deployment: what matters is not only how the model behaves on the environments we test, but whether the behavioral pattern recovered from those environments continues to hold under new payoff-equivalent descriptions. We refer to this out-of-environment stability as \emph{behavioral portability}: the extent to which a model's mapping from payoff-relevant features to actions remains stable across environments that preserve the same payoff structure but differ in payoff-irrelevant framing.

Most evaluation proceeds via \emph{prompt suites}: curated sets of environments meant to approximate deployment conditions. Performance on the suite is then extrapolated to environments not in it. That extrapolation implicitly assumes portability---once the task is fixed, the model's induced action distribution should be stable under payoff-irrelevant changes in framing. Yet even small reframings can shift LLM behavior, so portability is not something a finite suite can certify; it is an empirical object to be measured.

This issue is central to the adoption of AI systems in real-world decision environments such as labor markets, hiring, task allocation, and workplace evaluation, where users cannot fully specify every contingency in advance. As suggested by \citet{hadfieldmenell2018incomplete}, decision problems delegated to algorithms such as LLMs resemble incomplete contracts: principals cannot enumerate and validate the agent's behavior across the full scope of cases it will encounter, so successful delegation depends on the agent filling these gaps in ways consistent with the principal's intended objective. If an LLM's behavior depends on payoff-irrelevant features of how a problem is described, finite evaluations become an unreliable basis for deployment in open-ended environments.

We make this precise with a simple decomposition. A decision environment $e=(x,z)$ separates into the payoff-relevant features $x$, which enter the evaluator's utility, and the remaining features $z$ of how the problem is presented. Under a natural uniqueness condition on the evaluator's optimum, an aligned policy must depend on $e$ only through $x$: at any two environments with the same payoff-relevant component, it induces the same action distribution. Any aligned algorithm, therefore, responds only to $x$: payoff-equivalent reframings leave its behavior unchanged. This yields an operational test. If the policy responds only to $x$, the input--action mapping recovered in one environment must coincide with the mapping that holds in any payoff-equivalent environment; equivalently, a behavioral representation fit on a source environment must predict behavior in a held-out target environment as well as one fit directly on the target. Any predictive gap is therefore evidence that behavior depends on $z$ within the class, and hence that the policy cannot be aligned throughout it.

Concretely, for each task we construct many payoff-equivalent environments that vary only in $z$, and sample repeated model actions across many draws of payoff parameters $x$. On a subset of \emph{source} environments we fit an interpretable behavioral representation mapping $x$ to actions, then evaluate it out of sample in a held-out \emph{target} environment. As a predictive-transfer diagnostic, we compare the source-trained representation's target loss to the loss of the same representation fit directly on target data. Because we do not want our main measure to privilege a particular scoring rule, we also define a loss-agnostic discrepancy: the total variation distance, computed in the target environment, between two joint distributions over predicted and realized actions. In one distribution, the prediction comes from the representation fit on source environments and evaluated in the target; in the other, it comes from the target-trained benchmark. This gives a worst-case bound on portability loss: for any bounded criterion defined on the predicted--realized action pair, the source-trained and target-trained representations can differ in expected criterion value by at most this TV distance.


We apply the framework in a controlled experiment from experimental economics, where payoff structure is transparent. We study seven one-shot tasks (Dictator, Ultimatum, Trust, Public Goods, Beauty Contest, Lottery Choice, and Normal Form), and for each task construct a large set of decision environments that preserve the payoff mapping while varying only in framing and style. We evaluate several LLMs---GPT-4.1-nano \citep{openai2025gpt41}, Gemma-3-12B \citep{gemmateam2025gemma3}, Llama-3.1-8B and Llama-3.1-70B \citep{grattafiori2024llama3}, and DeepSeek-R1 \citep{guo2025deepseekr1}---under answer-only and chain-of-thought (CoT) prompting \citep{wei2022cot}. Game-theoretic tasks are a natural testbed for evaluating portability in LLMs, both because their payoff-relevant values are transparent and because they have increasingly been used to measure LLM behavioral properties---including fairness, cooperation, trust, risk preferences, strategic reasoning, and rationality \citep{horton2023homo,guo2023gpt,brookins2024playing,chen2023economicrationality,mei2024turing,lore2024strategic,xie2024trust,akata2025repeated}.

Our results show that the tested LLMs do not demonstrate portability: behavioral mappings from payoff-relevant variables to actions that are learned in one environment often predict worse in another. CoT changes the portability levels of the model, on average improving portability, but not in a uniform way---in some cases portability improves, in others it does not. Finally, we find that reasoning models such as DeepSeek-R1 do better on portability across the tested tasks.

\paragraph{Related Literature}
Our portability question is a decision-theoretic analogue of dataset shift and domain adaptation, where behavior learned under one input distribution is deployed under another \citep{quinonero2009datasetshift,bendavid2010theory}. It also connects to causal transportability and invariant-prediction frameworks that formalize when knowledge transfers across environments \citep{bareinboim2012transportability,peters2016invariant,arjovsky2019irm}. In the LLM literature, robustness to prompt variation is studied through benchmark suites, prompt-sensitivity measures, multi-prompt evaluation, and behavioral stress tests \citep{liang2022helm,chatterjee-etal-2024-prompt,zhuo-etal-2024-prosa,chatterjee2024posix,mizrahi-etal-2024-state,cao2024worstprompt,zhu2024promptbench,zhu2023promptbench,mei2023assert,zheng2025promptanatomy,ribeiro2020checklist,goel2021robustnessgym,morris2020textattack}. Our contribution is to treat prompt sensitivity as \emph{behavioral portability}: the out-of-environment performance of an interpretable representation transported across payoff-equivalent textual frames, complementing work that uses LLMs as simulated economic agents \citep{aher2023using,horton2023homo,xie2025llmstrategic} and economic work on transfer performance \citep{andrews2022transfer}.

\section{Conceptual Framework and Motivation}\label{sec:conceptual}

We consider a Decision-Making Algorithm (DMA) that maps a decision environment
\(e\in\mathcal E\) into a distribution over a finite set of actions \(\mathcal A\). For LLMs, which are our main focus, an environment is a natural-language description of a decision problem, i.e., a prompt. In other
applications, \(e\) could collect all inputs available to a robot or software
system before making a decision.

Formally, let \(\Delta(\mathcal A)\) denote the simplex of probability
distributions over \(\mathcal A\). We represent the DMA by a policy rule
\[
\pi:\mathcal E\longrightarrow \Delta(\mathcal A).
\]
Next, we consider a human evaluator, with preferences over action--environment pairs, represented by a utility function
\[
u:\mathcal A\times\mathcal E\to\mathbb R.
\]
The key insight is that not every feature of the environment matters for the evaluator’s payoff. Some features define the decision problem itself and are therefore payoff-relevant, while others are not changing the underlying incentives structure. We formalize this distinction in the following assumption.

\begin{assumption}[Payoff-relevant features]\label{ass:payoff_relevant}
There exist sets \(\mathcal X\) and \(\mathcal Z\), maps
\(X:\mathcal E\to\mathcal X\) and \(Z:\mathcal E\to\mathcal Z\), and a function
\(\tilde u:\mathcal A\times\mathcal X\to\mathbb R\) such that each environment
\(e\in\mathcal E\) can be written as \(e=(x,z)\), where \(x=X(e)\) and
\(z=Z(e)\). The component \(x\) collects all payoff-relevant features of the
environment, while \(z\) summarizes all remaining payoff-irrelevant features, i.e.
Payoffs depend on the environment only through \(x\):
\[
u(a,e)=\tilde u\bigl(a,X(e)\bigr)
\qquad\text{for all }(a,e)\in\mathcal A\times\mathcal E.
\]
\end{assumption}

For example, in an ultimatum game, \(X(e)\) includes the stake, feasible offers, player roles, and the payoff consequences of acceptance and rejection. By contrast, \(Z(e)\) includes labels that do not change the payoff problem: whether the proposer is called ``Participant 365,'' ``Employee 47,'' or
``Player A.''

For each payoff-relevant state \(x\in\mathcal X\), let
\(\mathcal F(x)\subseteq\Delta(\mathcal A)\) denote the set of feasible distributions over actions. Given \(F\in\mathcal F(x)\), define the evaluator's von Neumann--Morgenstern expected utility by $U(F,x):=\mathbb E_{a\sim F}\bigl[\tilde u(a,x)\bigr]$. We make the following assumption.

\begin{assumption}[Uniqueness of the evaluator's optimum]\label{ass:unique}
For every \(x\in\mathcal X\), the set
\[
\arg\max_{F\in\mathcal F(x)} U(F,x)
\]
is a singleton. We denote its unique element by $B(x)$.
\end{assumption}

Notice that the uniqueness assumption is on the optimal feasible \emph{distribution} over actions, and not on the realized action drawn from that distribution. Next, we say that the algorithm is \emph{pointwise aligned} with the evaluator at environment \(e\) if it chooses a utility-maximizing distribution at the corresponding \(x=X(e)\).

\begin{definition}[Pointwise alignment]\label{def:alignment}
Given \(u\) satisfying Assumption~\ref{ass:payoff_relevant}, a policy \(\pi\) is
\emph{aligned} with the evaluator at environment \(e\in\mathcal E\) if
\[
\pi(e)\in\arg\max_{F\in\mathcal F(X(e))}U(F,X(e)).
\]
We say \(\pi\) is aligned on a set
\(\mathcal E'\subseteq\mathcal E\) if it is aligned at every
\(e\in\mathcal E'\).
\end{definition}

A DMA can respond to any aspect of the decision environment, including features that are irrelevant to the evaluator. Because the space of environmental variables may be large relative to the payoff-relevant components, it is useful to isolate when the DMA's behavior depends only on the payoff-relevant structure of the problem, which is potentially much smaller. As different applications may focus on different aspects of the induced action distribution, we define \emph{behavioral portability} relative to a measurable behavioral summary $T$, and require that this summary depend on the environment only through the payoff-relevant features.

\begin{definition}[\(T\)-behavioral portability]\label{def:portability}
Fix a set of environments \(\mathcal E'\subseteq\mathcal E\) and a measurable behavioral summary
\[
T:\Delta(\mathcal A)\times\mathcal X\to\mathcal Y.
\]
A policy \(\pi:\mathcal E\to\Delta(\mathcal A)\) is \(T\)-portable on
\(\mathcal E'\) if there exists a measurable map
\[
r_x:\mathcal X\to\mathcal Y
\]
such that, for every \(e\in\mathcal E'\),
\[
T(\pi(e),X(e))=r_x(X(e)).
\]
Equivalently, the behavioral summary \(T(\pi(e),X(e))\) depends on the
environment only through the payoff-relevant component \(X(e)\), and not through
the payoff-irrelevant component \(Z(e)\).
\end{definition}


\begin{tcolorbox}[myexamplebox]
\begin{remark}
Behavioral portability is related to prompt robustness, but it is not identical to it. Prompt robustness asks whether behavior is stable under changes in wording. Portability asks whether behavioral variation across environments is fully mediated by the payoff-relevant variables \(X(e)\). Thus, portability is a structured form of robustness: behavior may vary with the decision problem, but not with payoff-irrelevant features of the decision environment.    
\end{remark}
\end{tcolorbox}

The next claim gives the link from alignment to portability. Under uniqueness, pointwise alignment forces the full action distribution to reduce to a function of the payoff-relevant variables. Since the full distribution is itself one possible behavioral summary, this implies distribution-level portability; and because every other behavioral summary is a function of that distribution, it also implies \(T\)-portability for every measurable \(T\).

\begin{claim}[Alignment implies behavioral portability]\label{cl:invariance}
Suppose Assumptions~\ref{ass:payoff_relevant} and~\ref{ass:unique} hold, and that \(\pi\) is aligned with the evaluator on \(\mathcal E'\subseteq\mathcal E\). Then there exists a reduced policy $\pi_x:\mathcal X\to\Delta(\mathcal A)$, $\pi_x(x):=B(x)$, such that
\[
\pi(e)=\pi_x(X(e))
\qquad\text{for all }e\in\mathcal E'.
\]
and \(\pi\) is \(T\)-portable on \(\mathcal E'\) for every measurable behavioral summary $T$. In particular, for any \(e_1,e_2\in\mathcal E'\) such that $X(e_1)=X(e_2)=x$, we have
\[
\pi(e_1)=\pi(e_2)=B(x).
\]
\end{claim}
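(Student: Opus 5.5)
The argument is a direct unwinding of the definitions, so the plan is to chain Definition~\ref{def:alignment}, Assumption~\ref{ass:unique}, and Definition~\ref{def:portability}. First, fix any \(e\in\mathcal E'\) and set \(x=X(e)\). By pointwise alignment, \(\pi(e)\in\arg\max_{F\in\mathcal F(x)}U(F,x)\). Assumption~\ref{ass:unique} says this argmax is the singleton \(\{B(x)\}\), so \(\pi(e)=B(x)\). Defining \(\pi_x:\mathcal X\to\Delta(\mathcal A)\) by \(\pi_x(x):=B(x)\) then gives \(\pi(e)=\pi_x(X(e))\) for all \(e\in\mathcal E'\). This is the first conclusion. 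Assumption~\ref{ass:payoff_relevant} is what makes \(U(F,x)\) well defined as a function of \(x\) alone, since \(u(a,e)=\tilde u(a,X(e))\). That is the reason the alignment condition at \(e\) refers only to \(X(e)\).

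The "in particular" statement follows at once. If \(X(e_1)=X(e_2)=x\), both \(\pi(e_1)\) and \(\pi(e_2)\) equal \(\pi_x(x)=B(x)\).

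For \(T\)-portability, take any measurable \(T:\Delta(\mathcal A)\times\mathcal X\to\mathcal Y\) and define \(r_x(x):=T(B(x),x)\). Then for every \(e\in\mathcal E'\),
\[
T(\pi(e),X(e))=T\bigl(B(X(e)),X(e)\bigr)=r_x(X(e)),
\]
which is exactly the condition in Definition~\ref{def:portability}.

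The only step needing care is the requirement that \(r_x\) be \emph{measurable}. Since \(r_x\) is the composition of \(T\) with \(x\mapsto(B(x),x)\), it suffices that \(B\) be measurable from \(\mathcal X\) into \(\Delta(\mathcal A)\), where \(\Delta(\mathcal A)\) is a finite-dimensional simplex carrying its Borel structure. I would handle this in one of two ways. The first is to note that measurability of the argmax selection is implicit in the setup. The second is to observe that \(r_x\) only needs to be evaluated on \(X(\mathcal E')\) and to restrict the measurable structure accordingly. If neither is acceptable, I would add a mild regularity condition: for instance, a measurable-selection argument applied to the unique maximizer of \(U(\cdot,x)\), which is linear in \(F\) over the compact set \(\mathcal F(x)\). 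Under a measurable graph of \(\mathcal F\), such an argument yields measurability of \(B\). Everything else in the proof is immediate.
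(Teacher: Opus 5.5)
Your proposal is correct and follows the paper's proof essentially verbatim: alignment together with Assumption~\ref{ass:unique} gives \(\pi(e)=B(X(e))\) on \(\mathcal E'\), you set \(\pi_x:=B\), and you obtain \(T\)-portability by taking \(r_x(x):=T(B(x),x)\). Your added remark that \(r_x\) must be \emph{measurable} goes slightly beyond the paper. The paper passes over this silently, and your observation that it reduces to measurability of \(B\) (an implicit regularity assumption) is accurate.
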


\emph{Proof.} See Appendix~\ref{app:proofs}.

Claim~\ref{cl:invariance} shows that, for the algorithm and evaluator to be aligned according to definition \ref{def:alignment}, the DMA's behavior must be organized around the same payoff-relevant variables \(X(e)\) that enter the evaluator's utility. If the full action distribution varies with payoff-irrelevant features \(Z(e)\), then there exist payoff-equivalent environments at which the algorithm implements different distributions over actions. Under uniqueness, at least one of these distributions must be strictly suboptimal for the evaluator.

The claim establishes portability as an implication of alignment. The next corollary states what portability buys for evaluation of fitted behavioral objects: if a policy is \(T\)-portable, then any loss comparing a fixed \(X\)-only representation with the behavioral summary \(T\) has the same expectation across environment distributions with the same \(X\)-marginal.

\begin{corollary}[Transportability of source-fitted \(X\)-only losses]
\label{cor:transport}
Fix \(\mathcal E'\subseteq\mathcal E\), and suppose that \(\pi\) is
\(T\)-portable on \(\mathcal E'\). Let
\(r_x:\mathcal X\to\mathcal Y\) denote a reduced summary satisfying
\[
T(\pi(e),X(e))=r_x(X(e))
\qquad\text{for all }e\in\mathcal E'.
\]
Let \(\mu_s\) and \(\mu_t\) be probability measures on
\(\mathcal E'\) with the same marginal distribution of \(X(e)\). Let \(\widehat{\mathcal Y}\) be a prediction space, and let
\[
\widehat h_s:\mathcal X\to\widehat{\mathcal Y}
\]
denote a measurable \(X\)-only representation fitted under the source
distribution \(\mu_s\). Then for any measurable loss or evaluation
criterion
\[
\ell:\widehat{\mathcal Y}\times\mathcal Y\times\mathcal X\to\mathbb R,
\]
with finite expectations,
\[
\mathbb E_{e\sim\mu_s}
\left[
\ell\left(
\widehat h_s(X(e)),
T(\pi(e),X(e)),
X(e)
\right)
\right]
=
\mathbb E_{e\sim\mu_t}
\left[
\ell\left(
\widehat h_s(X(e)),
T(\pi(e),X(e)),
X(e)
\right)
\right].
\]
\end{corollary}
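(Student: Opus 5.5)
The plan is to reduce the integrand to a function of \(X(e)\) alone, then use the fact that \(\mu_s\) and \(\mu_t\) share the same \(X\)-marginal. The first step uses \(T\)-portability (Definition~\ref{def:portability}) in the form of the reduced summary \(r_x\). For every \(e\in\mathcal E'\) it gives \(T(\pi(e),X(e))=r_x(X(e))\). Substituting this into the integrand, we obtain pointwise on \(\mathcal E'\)
\[
\ell\bigl(\widehat h_s(X(e)),T(\pi(e),X(e)),X(e)\bigr)=g(X(e)),\qquad g(x):=\ell\bigl(\widehat h_s(x),r_x(x),x\bigr).
\]
Since \(\mu_s\) and \(\mu_t\) are supported on \(\mathcal E'\), the two expectations in the statement equal \(\mathbb E_{\mu_s}[g(X(e))]\) and \(\mathbb E_{\mu_t}[g(X(e))]\), respectively.

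The second step is measurability of \(g\). The map \(x\mapsto(\widehat h_s(x),r_x(x),x)\) is measurable into the product space, because each coordinate is measurable. Composing it with the measurable \(\ell\) shows that \(g\) is measurable. We take \(X\) to be measurable, as implicit in speaking of the marginal distribution of \(X(e)\).

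The third step is the change of variables. Write \(\nu:=\mu_s\circ X^{-1}=\mu_t\circ X^{-1}\) for the common pushforward. The change-of-variables formula for pushforward measures then gives
\[
\mathbb E_{e\sim\mu_s}[g(X(e))]=\int_{\mathcal X} g\,d\nu=\mathbb E_{e\sim\mu_t}[g(X(e))].
\]
The finite-expectation hypothesis guarantees that \(g\) is \(\nu\)-integrable, so both sides are well defined and equal. No positivity or boundedness of \(\ell\) is needed beyond this integrability.

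There is no real obstacle here. The only point needing care is noticing that \(\widehat h_s\), although it was fitted using \(\mu_s\), is treated as a fixed deterministic function when the expectations are taken. Its dependence on the source sample therefore plays no role: the statement conditions on the fitted representation. The equality is then purely a consequence of the integrand depending on \(e\) only through \(X(e)\). If \(\widehat h_s\) were random, the same argument would apply conditionally on the fitting data.
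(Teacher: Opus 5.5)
Your proposal is correct and follows essentially the same route as the paper: use \(T\)-portability to rewrite the integrand as \(\ell\bigl(\widehat h_s(X(e)),r_x(X(e)),X(e)\bigr)\), a measurable function of \(X(e)\) alone, and then use \(X_\#\mu_s=X_\#\mu_t\) to equate the two expectations. The paper leaves several details implicit that you spell out: measurability of the composed map, integrability under the common pushforward, and treating \(\widehat h_s\) as a fixed function.
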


\emph{Proof.} See Appendix~\ref{app:proofs}.

Combining Claim~\ref{cl:invariance} with Corollary~\ref{cor:transport} yields a simple evaluation implication of pointwise alignment. When the algorithm is aligned, every measurable behavioral summary \(T\) is portable, so once the policy function \(\widehat h\) has been learned, the expected loss from comparing \(\widehat h(X(e))\) to the portable behavioral summary is the same across environment distributions that share an \(X\)-marginal.

Corollary~\ref{cor:transport} captures how adoption decisions are made, where an evaluator observes the algorithm only on a finite evaluation suite and needs to decide on adoption. An evalautor, given her chosen behavioral summary \(T\),  learns a mapping \(\widehat h\) from \(X\) to $T(X)$ from the observed test behavior, and then evaluates it with a criterion \(\ell\) that measures whether the algorithm behaves appropriately given the payoff-relevant structure \(X(e)\). For example, the criterion may be predictive, such as loss against realized actions, or decision-oriented, such as the value of the induced action under a downstream objective.

The evaluator wants this finite-suite assessment to remain informative beyond the particular environments tested: a small loss on the suite should mean that the same \(X\)-based behavioral description continues to hold in other payoff-equivalent environments. Corollary~\ref{cor:transport} shows that this extrapolation is valid once the algorithm's behavior is portable: behavior learned in the training environment then generates the same loss in any payoff-equivalent target environment.

The corollary also gives a natural empirical diagnostic\footnote{We discuss explicitly how Corollary \ref{cor:transport} maps to our empirical measure in Appendix \ref{app:projection}}, which we implement in the next section. Specifically, we learn the mapping between the payoff-relevant variables and \(T\), then ask whether it continues to describe behavior in a held-out target environment with the same distribution of payoff-relevant variables. If behavior is portable, every candidate \(X\)-only representation has the same population risk across payoff-equivalent frames, so under a uniqueness condition the source-trained and target-trained population projections coincide. A gap between these two objects means that the evaluation is frame-specific: a criterion that appears small in one test suite need not remain small in a payoff-equivalent deployment frame. Such a gap shows that behavior depends on \(Z(e)\) in a way captured by the chosen summary and representation class, and is therefore inconsistent with alignment across the compared environments.

\begin{tcolorbox}[myexamplebox]
\begin{remark}
Portability is related to generalization, where the question is whether performance carries over to cases where the underlying distribution of inputs and and targets change. In standard generalziation questions the analyst fixes the input variables and the target label in advance and asks whether performance carries over to fresh draws of those same variables. The analyst controls the representation and chooses which variables enter the model, so payoff-irrelevant features can be excluded by construction; invariance to them is not something to be tested but something the design imposes. Portability is the problem that remains once that control is gone. An open-ended decision-making algorithm such as an LLM agent is instead handed the entire environment \(e=(x,z)\)---the payoff-relevant variables \(x\) together with the surrounding presentation \(z\)---which cannot be excluded by construction, since the problem must be presented in some form. Because the algorithm is a black box, the evaluator knows which variables should drive the decision but not whether the algorithm's behavior is actually organized around them. The relevant comparison is therefore no longer across fresh draws of the same variables but across different presentations of the same decision: holding the payoff-relevant content \(X\) fixed and varying only the payoff-irrelevant \(Z\), does behavior learned as a function of \(X\) still describe the algorithm? That invariance is what portability requires.

\end{remark}
\end{tcolorbox}

\section{Measuring LLM Behavioral Portability}\label{sec:portability}

Corollary~\ref{cor:transport}, together with Appendix~\ref{app:projection},
motivates our empirical measure of portability. The corollary says that the loss
of a fixed \(X\)-only representation transports across payoff-equivalent
environments when the evaluated behavioral summary is portable. Appendix~\ref{app:projection}
applies this logic to the population projection problem and shows that, under
alignment, source-trained and target-trained \(X\)-only projections should
coincide.

We therefore use a train-on-environments, test-on-another protocol. We compare
two fitted objects in the same held-out target environment: one trained on source
environments and transported to the target, and one trained directly on
target-environment data. Portability failures are measured by discrepancies
between the target-environment prediction--action distributions induced by these
two fitted objects. Because we do not want to privilege a particular evaluation
criterion, we use total variation distance as a loss-agnostic upper bound over
bounded criteria.

\paragraph{Train-on-environments, test-on-another.}
Fix a collection of environments that share the same payoff-relevant structure,
and differ only in payoff-irrelevant features. Concretely, we select a set of
prompt variants (different values of $Z(e)$), and for each variant we generate a
dataset by varying payoff-relevant inputs $X(e)$ across instances and sampling
repeated actions from the LLM in each instance. We then:
(i) choose a subset of environments as \emph{source} environments,
(ii) estimate a behavioral model on the pooled source data that predicts the
LLM's behavior using only $X(e)$ as input,
and (iii) evaluate the fitted object in a \emph{target} environment not used for
training. To benchmark what is achievable within the target, we also estimate
the same behavioral model using only target-environment data.

This yields two fitted objects: one learned from source environments and
transported to the target, and one learned directly in the target. Portability is
measured by comparing the joint distributions over predictions and realized
actions that these two fitted objects induce on held-out target data. Because the
set of environment subsets we can use for training and testing is large, we
sample uniformly across possible splits and report the distribution of the
resulting TV distances.

\paragraph{Total variation as a loss-agnostic portability measure}
The object we compare is the target-environment distribution induced by the transported predictor versus the corresponding distribution induced by the target-trained benchmark. Total variation is useful for this purpose because it
does not privilege a particular downstream scoring rule. Instead, it gives a worst-case bound over all bounded evaluation criteria defined on the objects being compared.

Let $\mathcal Y$ denote the codomain of the fitted predictor used for evaluation. In our empirical specification $\mathcal Y=\mathbb R$, and the prediction is the scalar action predicted by the fitted behavioral model. The same definition covers probabilistic predictors, for which $\mathcal Y=\Delta(\mathcal A)$, or other summaries by changing $\mathcal Y$ accordingly.

Fix a source--target comparison, where $e$ denotes the pooled source training setup and $e'$ denotes the held-out target environment. Let $P_{e\to e'}$ denote the joint distribution in the target environment of the pair 
\[
\bigl(\hat Y,A\bigr)\in\mathcal Y\times\mathcal A,
\]
when $\hat Y$ is generated by the \emph{transported} fitted model, and let
$Q_{e'}$ denote the corresponding joint distribution in the same target
environment when $\hat Y$ is generated by the target-trained benchmark. Total
variation is  (e.g \citep{levin2017markov,kunievsky2026effect})
\[
\mathrm{TV}(P,Q)
\;=\;
\sup_{0\le f\le 1}\bigl|\mathbb E_{P}f-\mathbb E_{Q}f\bigr|,
\]
where the supremum ranges over measurable
$f:\mathcal Y\times\mathcal A\to[0,1]$. Equivalently, if we define the
portability discrepancy under any bounded evaluation criterion
$f:\mathcal Y\times\mathcal A\to[0,1]$ as
\[
\mathrm{Port}_{f}(e\to e') \;:=\;
\bigl|\mathbb E_{P_{e\to e'}}[f(\hat Y,A)]
      -\mathbb E_{Q_{e'}}[f(\hat Y,A)]\bigr|,
\]
then
\[
\mathrm{TV}\!\left(P_{e\to e'},Q_{e'}\right)
\;=\;
\sup_{0\le f\le 1}\mathrm{Port}_{f}(e\to e').
\]

Thus, TV is the maximal portability discrepancy over all bounded evaluation
criteria defined on the prediction--action pair. If the transported and
target-trained predictors induce joint distributions of predictions and actions
in the target that differ by at most $\varepsilon$ in TV, then \emph{no} bounded
evaluator criterion defined on these objects can change by more than
$\varepsilon$ in expectation. This is the sense in which TV gives a
loss-agnostic measure of behavioral portability.

Empirically, we estimate $\mathrm{TV}(P_{e\to e'},Q_{e'})$ using held-out target
samples, and aggregate these pairwise quantities into a portability matrix
across environments and prompting strategies. Appendix~\ref{app:tv-estimator}
gives the exact finite-sample TV estimator, including the discretization of
continuous predictions, the absence of smoothing, and the uncertainty reported
in the figures. Appendix~\ref{app:log-rmse} reports the analogous predictive-transfer
results based on log-RMSE.

\section{Measuring Portability Across Economic Decision Problems}
\label{sec:experiment}

In this section we study how large language models behave in simple economic decision problems when only the \emph{description} of the decision environment changes. Experimental economics is a natural setting for this question because it provides (i) canonical tasks with transparent, low-dimensional payoff-relevant structure $X(e)$, (ii) interpretable action spaces $\mathcal A$, and (iii) many payoff-irrelevant ways to describe the same underlying game or decision problem.

\paragraph{Tasks.}
We study seven one-shot tasks: Dictator, Public Goods, Trust, Ultimatum, Lottery Choice, Beauty Contest, and normal-form \(2\times2\) games. These tasks span distinct behavioral and strategic components commonly studied in experimental economics and increasingly used to evaluate LLM decision-making. The Dictator game isolates altruism and distributional preferences; the Ultimatum game adds strategic acceptance and punishment of unfair offers; the Trust game captures reciprocity and beliefs about partners; the Public Goods game measures cooperation under free-riding incentives; the Beauty Contest game probes iterative strategic reasoning; Lottery Choice captures risk preferences; and normal-form games span coordination and conflict with an explicit payoff matrix.

\paragraph{Payoff parameterization.}
For each game $g$, we sample payoff parameters $x=X(e)$ uniformly from pre-specified ranges (Appendix~\ref{app:GameDescription}). For each $x$, we draw multiple model completions to obtain repeated actions.

\paragraph{Textual environments.}
For each game, we generate $40$ alternative textual environments that differ in narrative framing. Each environment corresponds to a distinct value of $z=Z(e)$, while holding fixed the
game definition, the action space $\mathcal A$, and how the payoff parameters enter the prompt. Intuitively, across environments we change \emph{how} the same $x$ is described, not \emph{what} $x$ is; Appendix~\ref{app:prompts} gives the prompt materials, including base templates (Appendix~\ref{app:prompt-templates}) and environment variants (Appendix~\ref{app:env-examples}).

\paragraph{Models, prompting regimes, and sampling.}
We query a set of contemporary LLMs, spanning both open-weight and hosted models (GPT-4.1-nano, Gemma-3-12B, Llama-3.1-8B, and Llama-3.1-70B). For each model we consider two prompting regimes (see Appendix~\ref{app:prompts} for prompts), Chain-of-Thought (CoT) and No Chain-of-Thought (Non-CoT). We also apply the portability measures to DeepSeek-R1 \citep{guo2025deepseekr1} as a reasoning-specialized case study. All decision completions are sampled at temperature $1$.

To parse the textual outputs to a choice, we use an extractor model (GPT-4.1-mini \citep{openai2025gpt41}) that parses raw textual outputs into a single action $a\in\mathcal A$ using game-specific extraction instructions\footnote{see Appendix~\ref{app:extraction} for extraction prompts and human validation}. We apply the same extraction protocol across environments so that only $Z(e)$ changes while parsing remains fixed. Observations that cannot be parsed into $\mathcal A$ are dropped.

\paragraph{Data structure and splits.}
As discussed in Section~\ref{sec:portability}, we measure portability by training a behavioral model on one subset of environments and evaluating it on another. Let $e$ index a textual environment for a fixed game $g$, and
model $m$. We denote the resulting dataset by $D_{g,e,m}$, consisting of
observations $\{(x_i,a_i)\}_{i=1}^{n_{g,e,m}}$ where $x_i$ are payoff-relevant
features and $a_i\in\mathcal A$ are extracted actions chosen by the LLM $m$. For each $(g,e,m)$ we randomly split $D_{g,e,m}$ into a training set (70\%) and a test set (30\%). Source-trained models use training data pooled across source environments; target-trained benchmark models use the training split from the held-out target environment.

\paragraph{Behavioral representations.} On each training split we estimate an environment-specific behavioral model that maps payoff-relevant features to a predicted action. Concretely, for each $(g,e,m)$ we fit an ordinary least squares linear regression (OLS)\footnote{We use OLS as the main specification because it is transparent, low-variance, and directly interpretable as a behavioral response surface over payoff-relevant variables. Appendix~\ref{app:projection} formalizes this as an \(X\)-only projection target and explains why evidence of non-portability in this simple nested hypothesis class is meaningful; Appendix~\ref{app:nonlinear-robustness} reports robustness checks with richer nonlinear predictors.} of the extracted action $a$ on the components of $x=X(e)$:
$\hat a
\;=\;
\hat\pi_{\theta_{g,e,m}}(x)$. These fitted objects are the inputs to the TV portability measure, as described in
Section~\ref{sec:portability} and Appendix~\ref{app:tv-estimator}.

\paragraph{Portability evaluation protocol.}
Fix $(g,c,m)$ and consider a source set of environments $\mathcal S$ and a held-out
target environment $e' \notin \mathcal S$. We estimate a \emph{source-trained}
behavioral model by pooling the training splits from environments in $\mathcal S$,
and we estimate a \emph{target-trained} benchmark model using only the training
split from $e'$. We then evaluate both models on the held-out test split
$D^{\mathrm{test}}_{g,e',c,m}$ and compute the predictive-transfer portability
statistic $\mathrm{Port}_{s}$ and the TV-based ambiguity measure as defined in
Section~\ref{sec:portability}.

Because the number of possible source--target choices is large, we sample
$(\mathcal S,e')$ uniformly at random from admissible splits (holding fixed the
source-set size) and report the resulting distribution of portability measures.
Aggregating across all sampled splits yields a portability matrix (and summary
statistics) that quantifies how reliably behavior learned in one wording
generalizes to another wording that preserves the same payoff-relevant structure.

\section{Results}\label{sec:results}
\subsection{Main result: payoff-equivalent reframing produces substantial portability losses}

Our main object is the total variation distance between the joint distribution induced in the target environment by the transported predictor and the corresponding distribution induced by the target-trained benchmark, as discussed in \ref{sec:portability}. The metric is central because it does not depend on any particular evaluation criteria. Instead, it has a direct worst-case interpretation: a TV distance of \(\varepsilon\) implies that no bounded downstream criterion defined on the induced prediction-action pair can change by more than \(\varepsilon\) in expectation. 

Figure~\ref{fig:tv_cot_grid} shows that portability failures are widespread. Across games and models, the TV distance is frequently well above zero, implying that a behavioral mapping learned in one wording often does not induce the same prediction-action distribution when evaluated in a payoff-equivalent target. For the four baseline models available in all seven games, pooling over both prompting regimes and sampled source--target splits gives a mean TV of \(0.349\),
implying that any normalized downstream criterion bounded in $[0,1]$ and defined on this prediction-action representation can differ by at most $0.349$ loss-points in expectation. Model-level pooled means range from \(0.308\) for Llama-8B to \(0.437\) for Gemma-3-12B, with GPT-4.1-nano and Llama-70B at \(0.325\) and \(0.328\), respectively. Thus, no model is uniformly close to zero, and every model exhibits a non-negligible mass of source-target comparisons for which the transported representation differs substantially from the target-trained benchmark. In other words, even after holding fixed the payoff-relevant structure of the task, changes in the decision environment can materially alter what the fitted behavioral representation predicts and how those predictions line up with realized model actions.

The magnitude of the portability loss varies sharply across games. Lottery Choice is the most portable task, with an average TV of \(0.052\). Beauty Contest and Normal Form occupy an intermediate range, with average TV near \(0.200\) in both cases. The social and strategic-allocation tasks are substantially less portable: Public Goods averages \(0.464\), Ultimatum \(0.480\), Dictator \(0.525\), and Trust \(0.525\). Thus, the environments that invite fairness, cooperation, reciprocity, or other socially framed interpretations show the largest departures from invariance, while the most explicitly quantitative lottery task is much more stable. 

Appendix~\ref{app:log-rmse} reports analogous log-RMSE results, with additional figures provided in Appendix~\ref{app:additional-figures}. Appendix~\ref{app:nonlinear-robustness} presents robustness checks using more flexible nonlinear predictors in place of the baseline OLS specification. 

\begin{figure*}[!t]
    \centering
    \includegraphics[width=\textwidth]{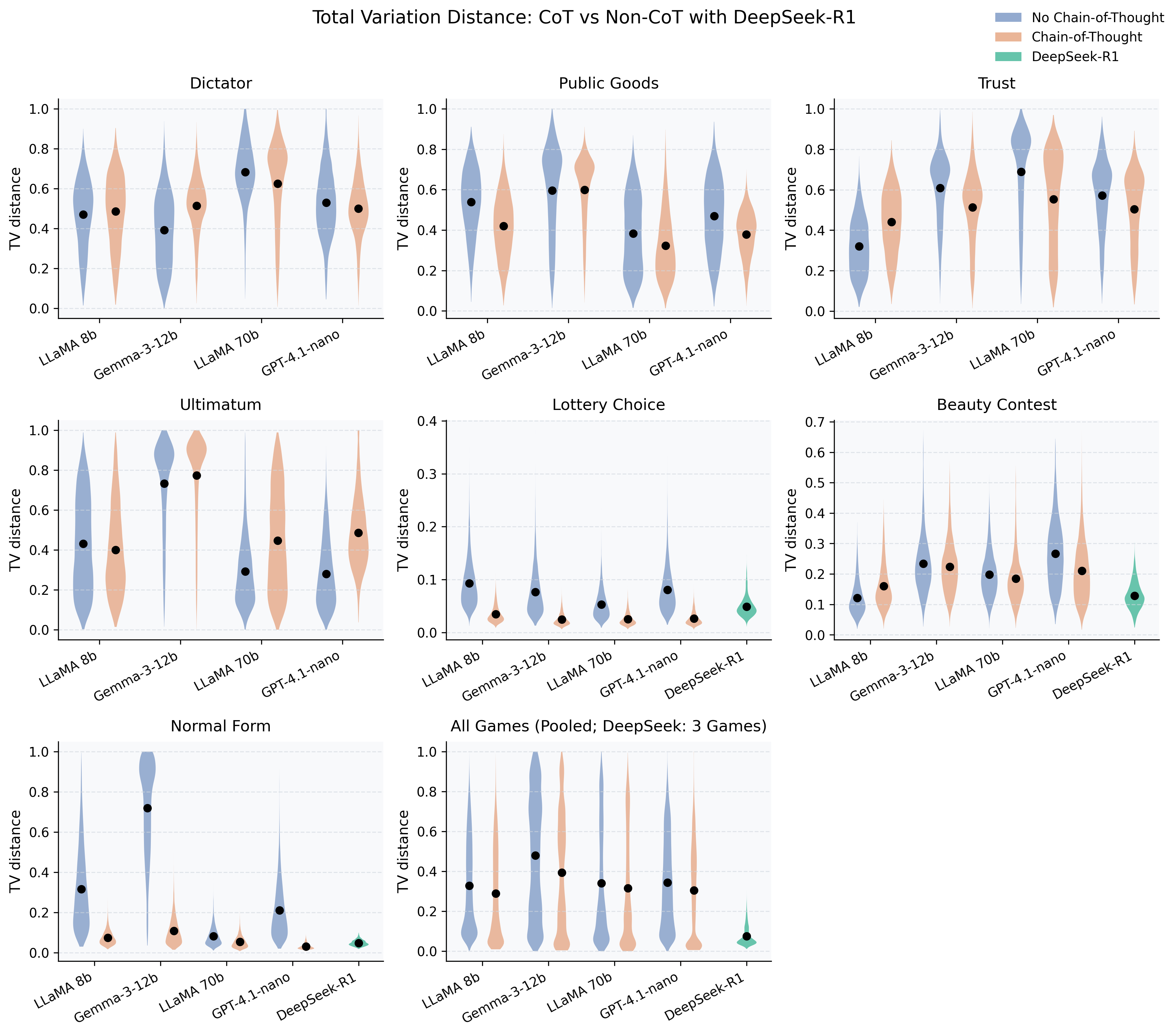}
    \caption{\textbf{Loss-agnostic divergence under prompt variation.}
    Total variation distance between the joint distributions induced by the
    transported predictor and the target-trained benchmark in held-out
    target environments, comparing non-CoT, CoT, and DeepSeek-R1 where
    DeepSeek data are available. DeepSeek-R1 is shown in green. Larger values
    imply larger worst-case changes for bounded criteria defined on the
    induced prediction-action pair.}
    \label{fig:tv_cot_grid}
\end{figure*}
\FloatBarrier

\subsection{Chain-of-thought changes portability, but not monotonically}

We next ask whether chain-of-thought prompting systematically improves portability. Chain-of-thought may improve portability by standardizing the reasoning process and helping to elicit the underlying structure of the problem. Figure~\ref{fig:tv_cot_grid} shows that CoT improves portability on average across all models. Pooling across games and models, mean TV is \(0.326\) under CoT compared to \(0.373\) under non-CoT, a reduction of \(0.047\) TV points, or \(12.6\%\). At the model level, the pooled CoT reduction is largest for Gemma-3-12B (\(0.086\) TV points, \(18.0\%\)), and smaller for Llama-8B (\(0.039\), \(12.0\%\)), GPT-4.1-nano (\(0.039\), \(11.3\%\)), and Llama-70B (\(0.024\), \(7.1\%\)).

The game-level pattern is more nuanced than the pooled averages. CoT lowers TV substantially in Normal Form (\(-0.265\), \(-79.7\%\)), and more modestly in Public Goods (\(-0.067\), \(-13.5\%\)), Lottery Choice (\(-0.048\), \(-63.4\%\)), Trust (\(-0.044\), \(-8.1\%\)), and Beauty Contest (\(-0.010\), \(-5.0\%\)). But it increases TV in Dictator (\(+0.012\), \(+2.4\%\)) and especially in Ultimatum (\(+0.093\), \(+21.3\%\)). This heterogeneity is consistent with the idea that reasoning can either stabilize decisions around payoff-relevant structure or amplify frame-specific narratives, depending on how the model interprets the task. Normal Form is especially informative because CoT sharply reduces portability loss in the setting where the formal strategic structure is most explicit. Ultimatum, by contrast, shows that a reasoning trace can also make behavior more sensitive to fairness narratives and other wording-specific interpretations.

The non-monotonicity also appears within models across games. Gemma-3-12B illustrates this sharply: CoT lowers TV in Normal Form by \(0.611\) (\(-84.9\%\)) but raises it in Dictator by \(0.121\) (\(+30.8\%\)) and in Ultimatum by \(0.041\) (\(+5.5\%\)). GPT-4.1-nano shows the same sign-changing pattern, lowering TV in Normal Form by \(0.179\) (\(-85.1\%\)) but raising it in Ultimatum by \(0.206\) (\(+73.6\%\)). The Llama models behave similarly: Llama-70B lowers TV in Trust by \(0.136\) (\(-19.7\%\)) but raises it in Ultimatum by \(0.154\) (\(+52.9\%\)), while Llama-8B lowers TV in Normal Form by \(0.242\) (\(-76.3\%\)) but raises it in Trust by \(0.120\) (\(+37.6\%\)). Thus, the effect of reasoning is not a stable model-level property; it depends on the interaction between the model, the game, and the interpretive channel activated by the reasoning instruction.

Appendix~\ref{app:parameter-diagnostics} provides diagnostics for these CoT patterns. We compare out-of-sample \(R^2\) from models that use only payoff-relevant variables to models that also include textual-environment indicators; the incremental \(R^2\) measures how much frame information explains choices after conditioning on payoff structure. Adding environment indicators increases predictive \(R^2\), but less under CoT than under non-CoT: the average increment is \(0.034\) under CoT and \(0.076\) under non-CoT. This is consistent with the TV results: CoT reduces, but does not eliminate, residual dependence on payoff-irrelevant wording. The appendix also examines the reasoning text directly. CoT responses often mention payoff-relevant numerical parameters, though unevenly across games, and within-environment Jaccard similarity exceeds between-environment similarity even for payoff-equivalent prompts (Figures~\ref{fig:numeric_faithfulness}--\ref{fig:jaccard_cot} in Appendix~\ref{app:additional-figures}). The motif analysis in Figure~\ref{fig:reasoning_motifs} helps interpret this heterogeneity: some tasks elicit structure-tracking motifs such as expected-value, risk, and strategic-equilibrium reasoning, while others elicit more frame-sensitive motifs such as fairness, cooperation, and anchoring.

\subsection{Stronger Reasoning}

The mixed CoT results show that simply asking a model to reason does not guarantee portability. Still, stronger reasoning could matter here: portability requires recovering the same payoff-relevant representation from different surface descriptions. A procedure that explicitly parses the decision problem, identifies the variables entering payoffs, and checks how they determine incentives may be less sensitive to the narrative frame than one that maps each wording directly to an action. DeepSeek-R1 is therefore a useful stronger-reasoning case study: if portability failures arise because payoff-equivalent prompts induce different effective representations of the task, a model that more reliably abstracts from the frame to the common payoff object \(X(e)\) should perform better, especially in games with explicit formal structure.

This is what we observe in the three games where we collected DeepSeek-R1 data. In Figure~\ref{fig:tv_cot_grid}, DeepSeek lies close to the best non-DeepSeek portability frontier in each game. Its average TV is \(0.049\) in Lottery Choice (vs.\ \(0.025\) for Gemma-3-12B under CoT, the best non-DeepSeek model), \(0.129\) in Beauty Contest (vs.\ \(0.161\) for Llama-8B under CoT), and \(0.049\) in Normal Form (vs.\ \(0.031\) for GPT-4.1-nano under CoT). DeepSeek is thus not uniformly best, but consistently near the strongest observed baseline.

The Appendix examines the reasoning outputs to understand why. DeepSeek is highly numerically grounded: it mentions at least one payoff-relevant parameter in every sampled response, with mean parameter coverage of \(1.000\) in Lottery Choice, \(0.965\) in Beauty Contest, and approximately \(1.000\) in Normal Form (Figure~\ref{fig:numeric_faithfulness}). Its reasoning motifs are organized around the formal task: expected-value and risk language dominate Lottery Choice, strategic-equilibrium language dominates Normal Form, and Beauty Contest combines strategic reasoning with anchoring and fairness motifs (Figure~\ref{fig:reasoning_motifs}). Normal Form provides an additional model-free check: DeepSeek selects a Nash-supported action in \(0.995\) of cases, compared with \(0.941\) for GPT-4.1-nano under CoT, \(0.729\)--\(0.824\) for the remaining CoT models, and \(0.640\)--\(0.847\) for non-CoT baselines (Appendix~\ref{app:nash-support} and Figure~\ref{fig:nash_alignment}). The narrower conclusion is not that stronger reasoning solves portability, but that it can improve portability when it recovers the same payoff-relevant representation across frames rather than generating environment-specific rationales.

\section{Conclusions}\label{sec:conclusions}

We introduced a framework for measuring \emph{behavioral portability} of LLMs: the extent to which a behavioral mapping learned in one textual environment transfers to another environment that preserves the same payoff-relevant
structure. Across multiple canonical one-shot economic decision problems, we constructed dozens of payoff-equivalent environments and found that portability is imperfect across all models we checked. In many model-game pairs, behavior inferred from one environment does not reliably predict behavior under an alternative one, even when the underlying payoff table is unchanged. This challenges the common evaluation practice of extrapolating from finite prompt suites to unseen textual variants.

\newpage

\bibliographystyle{apalike2}
\bibliography{bib}

\appendix
\setcounter{figure}{0}
\renewcommand{\thefigure}{A\arabic{figure}}

\newpage
\section{Proofs of Conceptual Claims}
\label{app:proofs}

This appendix restates and proves the two formal results used in
Section~\ref{sec:conceptual}.

\begin{claim}[Alignment implies behavioral portability]\label{cl:invariance-app}
Suppose Assumptions~\ref{ass:payoff_relevant} and~\ref{ass:unique} hold, and
that \(\pi\) is aligned with the evaluator on
\(\mathcal E'\subseteq\mathcal E\). Then there exists a reduced policy
\[
\pi_x:\mathcal X\to\Delta(\mathcal A),
\qquad
\pi_x(x):=B(x),
\]
such that
\[
\pi(e)=\pi_x(X(e))
\qquad\text{for all }e\in\mathcal E'.
\]
Consequently, \(\pi\) is \(T\)-portable on \(\mathcal E'\) for every measurable
behavioral summary
\[
T:\Delta(\mathcal A)\times\mathcal X\to\mathcal Y.
\]
In particular, for any \(e_1,e_2\in\mathcal E'\) such that
\[
X(e_1)=X(e_2)=x,
\]
we have
\[
\pi(e_1)=\pi(e_2)=B(x).
\]
\end{claim}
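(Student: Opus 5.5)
The argument unfolds directly from the definitions, in three steps.

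\textbf{Step 1: alignment pins down $\pi(e)$.} Fix $e\in\mathcal E'$ and write $x:=X(e)$. By Definition~\ref{def:alignment}, alignment at $e$ gives
\[
\pi(e)\in\arg\max_{F\in\mathcal F(x)}U(F,x).
\]
Assumption~\ref{ass:unique} says this argmax set is the singleton $\{B(x)\}$. Hence $\pi(e)=B(x)$.

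Define $\pi_x:\mathcal X\to\Delta(\mathcal A)$ by $\pi_x(x):=B(x)$. This is well defined for every $x\in\mathcal X$, and $B(x)\in\mathcal F(x)\subseteq\Delta(\mathcal A)$. Then $\pi(e)=\pi_x(X(e))$ for every $e\in\mathcal E'$.

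The ``in particular'' statement follows at once. If $X(e_1)=X(e_2)=x$, applying the above to $e_1$ and to $e_2$ gives $\pi(e_1)=\pi(e_2)=B(x)$.

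Assumption~\ref{ass:payoff_relevant} plays a modest role here. It is what makes $U(\cdot,x)$, and hence the optimization problem, depend on $e$ only through $x$. That is why $\arg\max_{F\in\mathcal F(X(e))}U(F,X(e))$ is indexed by $x$ alone.

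\textbf{Step 2: $T$-portability.} Fix any measurable summary $T:\Delta(\mathcal A)\times\mathcal X\to\mathcal Y$. Define $r_x(x):=T(B(x),x)=T(\pi_x(x),x)$. For every $e\in\mathcal E'$, Step~1 gives
\[
T(\pi(e),X(e))=T(\pi_x(X(e)),X(e))=r_x(X(e)),
\]
which is exactly the requirement of Definition~\ref{def:portability}.

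\textbf{Step 3: measurability of $r_x$ (the main obstacle).} Definition~\ref{def:portability} asks for $r_x$ to be measurable. Since $r_x$ is the composition of $T$ with $x\mapsto(B(x),x)$, it suffices that $B$ be measurable. The paper never imposes a measurable structure on $B$, so this is the only step that is not a formality.

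I would handle it in one of two ways. The first is to note that, with the discrete $\sigma$-algebra on $\mathcal X$ (or when $\mathcal X$ is countable, as in the finite-parameter experiments), every map out of $\mathcal X$ is measurable. The second is to invoke a measurable-maximum-theorem condition: if $\mathcal F$ is a measurable compact-valued correspondence and $\tilde u$ is Carath\'eodory, then the argmax correspondence admits a measurable selection, and uniqueness makes that selection equal to $B$.

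I would state this as a mild standing regularity assumption, namely that $B$ is measurable. If only the identity $T(\pi(e),X(e))=r_x(X(e))$ is required, I would remark that the argument in Step~2 goes through without any measurability assumption.
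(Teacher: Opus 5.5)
Your Steps 1 and 2 are the paper's proof: alignment together with Assumption~\ref{ass:unique} forces $\pi(e)=B(X(e))$, and $r_x(x):=T(B(x),x)$ then gives $T$-portability. Your Step 3 goes beyond the paper, whose proof never checks that this $r_x$ is measurable; you are right that some regularity of $B$ is needed (with $T(F,x)=F$ one gets $r_x=B$ on $X(\mathcal E')$), so stating measurability of $B$ as a standing assumption is the appropriate fix.
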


\begin{proof}[Proof of Claim~\ref{cl:invariance}]
For every \(e\in\mathcal E'\), pointwise alignment gives
\[
\pi(e)\in\arg\max_{F\in\mathcal F(X(e))}U(F,X(e)).
\]
By Assumption~\ref{ass:unique}, this argmax is the singleton \(B(X(e))\).
Therefore
\[
\pi(e)=B(X(e)).
\]
Define \(\pi_x:\mathcal X\to\Delta(\mathcal A)\) by \(\pi_x(x):=B(x)\). Then
\[
\pi(e)=\pi_x(X(e))
\qquad\text{for all }e\in\mathcal E'.
\]
For any measurable behavioral summary
\(T:\Delta(\mathcal A)\times\mathcal X\to\mathcal Y\), define
\[
r_x(x):=T(B(x),x).
\]
Then, for every \(e\in\mathcal E'\),
\[
T(\pi(e),X(e))
=
T(B(X(e)),X(e))
=
r_x(X(e)),
\]
so \(\pi\) is \(T\)-portable on \(\mathcal E'\). Finally, if
\(e_1,e_2\in\mathcal E'\) and \(X(e_1)=X(e_2)=x\), then
\[
\pi(e_1)=B(x)=\pi(e_2).
\]
\end{proof}

\begin{corollary}[Transportability of fitted \(X\)-only losses]\label{cor:transport-app}
Fix \(\mathcal E'\subseteq\mathcal E\), and suppose that \(\pi\) is
\(T\)-portable on \(\mathcal E'\). Let
\(r_x:\mathcal X\to\mathcal Y\) denote a reduced summary satisfying
\[
T(\pi(e),X(e))=r_x(X(e))
\qquad\text{for all }e\in\mathcal E'.
\]
Let \(\mu_s\) and \(\mu_t\) be probability measures on
\(\mathcal E'\) with the same marginal distribution of \(X(e)\), i.e.
\[
X_\#\mu_s=X_\#\mu_t .
\]
Let \(\widehat{\mathcal Y}\) be a prediction space, and let
\[
\widehat h_s:\mathcal X\to\widehat{\mathcal Y}
\]
denote a measurable \(X\)-only representation fitted under the source
distribution \(\mu_s\). Then for any measurable loss or evaluation
criterion
\[
\ell:\widehat{\mathcal Y}\times\mathcal Y\times\mathcal X\to\mathbb R,
\]
with finite expectations,
\[
\mathbb E_{e\sim\mu_s}
\left[
\ell\left(
\widehat h_s(X(e)),
T(\pi(e),X(e)),
X(e)
\right)
\right]
=
\mathbb E_{e\sim\mu_t}
\left[
\ell\left(
\widehat h_s(X(e)),
T(\pi(e),X(e)),
X(e)
\right)
\right].
\]
Equivalently, both expectations are equal to
\[
\mathbb E_{x\sim X_\#\mu_s}
\left[
\ell\left(
\widehat h_s(x),
r_x(x),
x
\right)
\right].
\]
\end{corollary}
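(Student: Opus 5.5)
The plan is a change-of-variables argument: use portability to rewrite the integrand as a function of \(X(e)\) alone, then push both expectations forward to \(\mathcal X\) along \(X\). Since \(X_\#\mu_s=X_\#\mu_t\), the two pushed-forward integrals are the same object.

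\textbf{Step 1 (reduce the integrand to \(X\)).} Define
\[
g(x):=\ell\bigl(\widehat h_s(x),\,r_x(x),\,x\bigr),\qquad g:\mathcal X\to\mathbb R .
\]
By \(T\)-portability on \(\mathcal E'\), \(T(\pi(e),X(e))=r_x(X(e))\) for every \(e\in\mathcal E'\). Both \(\mu_s\) and \(\mu_t\) are supported on \(\mathcal E'\), so the integrand satisfies
\[
\ell\bigl(\widehat h_s(X(e)),T(\pi(e),X(e)),X(e)\bigr)=(g\circ X)(e)
\]
pointwise on \(\mathcal E'\), and in particular \(\mu_s\)- and \(\mu_t\)-almost surely.

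\textbf{Step 2 (change of variables).} By the standard pushforward formula,
\[
\mathbb E_{e\sim\mu}[g(X(e))]=\int_{\mathcal X} g\,d(X_\#\mu)
\]
for any \(\mu\) under which \(g\circ X\) is integrable. The finite-expectation hypothesis guarantees this integrability. Applying the formula with \(\mu=\mu_s\) and with \(\mu=\mu_t\), and using \(X_\#\mu_s=X_\#\mu_t\), both expectations equal
\[
\mathbb E_{x\sim X_\#\mu_s}\bigl[\ell(\widehat h_s(x),r_x(x),x)\bigr].
\]
This is exactly the stated equivalent form, and the claimed equality follows.

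\textbf{Main obstacle: measurability of \(g\).} The argument needs \(g\) to be measurable on \(\mathcal X\) so that the pushforward integral is defined. I would obtain this by composition. The map \(x\mapsto(\widehat h_s(x),r_x(x),x)\) is measurable into the product space, because each coordinate is measurable: \(\widehat h_s\) is measurable by hypothesis and \(r_x\) is measurable by Definition~\ref{def:portability}. Composing with the measurable \(\ell\) then gives a measurable \(g\), with product \(\sigma\)-algebras used throughout.

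Two further remarks complete the plan. First, the fact that \(\widehat h_s\) was fitted under \(\mu_s\) plays no role: once it is fixed, it is simply a deterministic measurable function of \(x\). Second, \(\mu_s\) and \(\mu_t\) may differ arbitrarily in how they distribute \(Z(e)\). Portability makes that difference invisible to the integrand, which is the only place the \(Z\)-component could otherwise have entered.
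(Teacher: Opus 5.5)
Your proof is correct and follows the paper's argument: you use $T$-portability to rewrite the integrand as $g(X(e))$ with $g(x)=\ell(\widehat h_s(x),r_x(x),x)$, and then the equality $X_\#\mu_s=X_\#\mu_t$, via the pushforward formula, makes both expectations equal to $\mathbb E_{x\sim X_\#\mu_s}[g(x)]$. Your explicit composition argument for the measurability of $g$ spells out what the paper's proof asserts in one line.
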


\begin{proof}[Proof of Corollary~\ref{cor:transport}]
Since \(\pi\) is \(T\)-portable on \(\mathcal E'\),
\[
T(\pi(e),X(e))=r_x(X(e))
\qquad\text{for all }e\in\mathcal E'.
\]
Thus the evaluated loss of the source-fitted representation can be written as
\[
\ell\left(
\widehat h_s(X(e)),
T(\pi(e),X(e)),
X(e)
\right)
=
\ell\left(
\widehat h_s(X(e)),
r_x(X(e)),
X(e)
\right).
\]
The right-hand side is a measurable function of \(X(e)\) alone. Since
\(X_\#\mu_s=X_\#\mu_t\), its expectation is the same under \(\mu_s\) and
\(\mu_t\). Both expectations are equal to
\[
\mathbb E_{x\sim X_\#\mu_s}
\left[
\ell\left(
\widehat h_s(x),
r_x(x),
x
\right)
\right].
\]
\end{proof}

\subsection*{Remark on uniqueness}

Assumption~\ref{ass:unique} is not strictly necessary if one is interested only
in welfare. If for a given \(x\) there are multiple maximizers
\(F\in\arg\max_{G\in\mathcal F(x)}U(G,x)\), then any policy \(\pi\) with
\[
\pi(e)\in\arg\max_{G\in\mathcal F(X(e))}U(G,X(e))
\]
attains maximal utility at \(e\). In that case, the regret loss
\[
\ell(F,x)=\sup_{G\in\mathcal F(x)}U(G,x)-U(F,x)
\]
is zero for every optimal \(F\), regardless of how \(\pi\) varies with \(Z(e)\).
However, Assumption~\ref{ass:unique}---or, more generally, a \(Z\)-invariant
tie-breaking rule selecting a single canonical maximizer---is needed for the
stronger conclusion that alignment forces the full action distribution to be
\(Z\)-invariant:
\[
\pi(x,z_1)=\pi(x,z_2).
\]
Without uniqueness, an algorithm may remain perfectly aligned in terms of
utility while implementing different optimal actions across payoff-irrelevant
variations of the prompt.

\section{From Alignment to Measurement}
\label{app:projection}

Section~\ref{sec:conceptual} establishes the behavioral implication of
alignment. Under Assumptions~\ref{ass:payoff_relevant} and~\ref{ass:unique},
if a policy \(\pi\) is aligned on \(\mathcal E'\subseteq\mathcal E\), then
there exists a reduced policy \(\pi_x=B\) such that
\[
\pi(e)=B(X(e))
\qquad\text{for all }e\in\mathcal E'.
\]
Corollary~\ref{cor:transport} gives the corresponding loss-transport
statement for evaluation: once a fixed \(X\)-only fitted representation has
been learned, its population loss against a portable behavioral summary is the
same across payoff-equivalent environment distributions with the same
\(X\)-marginal.

The empirical exercise in Section~\ref{sec:portability} requires one more
step. We do not only evaluate the same fitted object in two environments. We
compare two fitted objects in the same target frame: a projection learned from
source frames and transported to the target, and the corresponding projection
learned directly in the target. This appendix supplies that population bridge.
It applies Corollary~\ref{cor:transport} to the risk of every candidate
\(h\in\mathcal H\). If the policy is aligned on a collection of
payoff-equivalent frames, then every candidate \(X\)-only representation has the
same population risk in every frame. Hence the entire projection objective is
frame-invariant and the source-trained and target-trained
population projections coincide. This conclusion remains true even when the
hypothesis class is small and misspecified. Conversely, population differences between source-trained and target-trained \(X\)-only projections are evidence against joint alignment on the compared frames, although absence of such differences does not certify alignment.

\subsection{Frames, target functionals, and \texorpdfstring{$X$}{X}-only projections}

Write an environment as \(e=(x,z)\). We index the population object by the frame \(z\), rather than by a single environment \(e=(x,z)\).

Let \(\nu\) be an evaluation distribution on \(\mathcal X\). For a set of frames \(\mathcal Z'\subseteq\mathcal Z\), assume that \((x,z)\in\mathcal E'\) for \(\nu\)-almost every \(x\) and every \(z\in\mathcal Z'\).

The fitted object may target either the full induced action distribution or a
lower-dimensional summary of it. Let
\[
T:\Delta(\mathcal A)\times\mathcal X\to\mathcal Y
\]
be the behavioral summary used for projection.

We fix a hypothesis class
\[
\mathcal H\subseteq\{h:\mathcal X\to\mathcal Y\},
\]
and a statistical divergence or loss
\[
D:\mathcal Y\times\mathcal Y\to\mathbb R_+,
\qquad
D(y,y)=0.
\]

\begin{definition}[\(X\)-only projection at a frame]
\label{def:projection}
For a frame \(z\), the \emph{\(X\)-only projection} of policy \(\pi\) onto
\(\mathcal H\), under behavioral summary \(T\), divergence \(D\), and evaluation distribution \(\nu\), is
\[
\widehat h_z
:=
\arg\min_{h\in\mathcal H}
J_z(h),
\qquad
J_z(h)
:=
\mathbb E_{x\sim\nu}
\left[
D\left(
    h(x),
    T\!\left(\pi(x,z),x\right)
\right)
\right],
\]
assumed to exist and be unique.
\end{definition}

The main empirical specification corresponds to the following choices:
\(\mathcal H\) is the class of linear regressions in payoff-relevant variables, \(D\) is squared error, \(\nu\) is the distribution over payoff-relevant inputs used to instantiate decision problems across prompt variants, and \(T\!\left(\pi(x,z),x\right)\) is the conditional mean of the extracted action under the LLM's induced action distribution in frame \(z\).

\subsection{Risk invariance and projection invariance}

The appendix uses Corollary~\ref{cor:transport} in its pointwise form. Although the corollary is stated for a source-fitted representation, its proof only uses that the evaluated object is a measurable function of \(X\). Therefore, for any fixed candidate \(h\in\mathcal H\), portability of \(T\) implies that the risk \(J_z(h)\) is the same in every payoff-equivalent frame with the same \(X\)-marginal. Projection invariance is the corresponding argmin statement: if the risk of every candidate is frame-invariant, then the whole objective is frame-invariant, and uniqueness makes the minimizer frame-invariant.

\begin{claim}[Risk invariance for fixed candidates]
\label{lem:fixed-risk-invariance}
Suppose \(\pi\) is \(T\)-portable on \(\mathcal E'\). Then, for any fixed
\(h\in\mathcal H\) and any \(z,z'\in\mathcal Z'\),
\[
J_z(h)=J_{z'}(h).
\]
\end{claim}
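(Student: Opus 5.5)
The plan is to reduce the claim to the pointwise identity supplied by $T$-portability (Definition~\ref{def:portability}) and then note that the resulting integrand no longer depends on the frame. Since $\pi$ is $T$-portable on $\mathcal E'$, there is a measurable $r_x:\mathcal X\to\mathcal Y$ with
\[
T\bigl(\pi(e),X(e)\bigr)=r_x\bigl(X(e)\bigr)\qquad\text{for all }e\in\mathcal E'.
\]
Fix $h\in\mathcal H$ and $z\in\mathcal Z'$. By the standing assumption of this subsection, $(x,z)\in\mathcal E'$ for $\nu$-almost every $x$. For each such $x$ we have $X(x,z)=x$, so $T(\pi(x,z),x)=r_x(x)$. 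Hence
\[
D\bigl(h(x),T(\pi(x,z),x)\bigr)=D\bigl(h(x),r_x(x)\bigr)\qquad\text{for }\nu\text{-a.e. }x.
\]

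Integrating against $\nu$ then gives
\[
J_z(h)=\mathbb E_{x\sim\nu}\bigl[D(h(x),r_x(x))\bigr],
\]
and the right-hand side is the same expression for every $z\in\mathcal Z'$. Applying this to both $z$ and $z'$ yields $J_z(h)=J_{z'}(h)$. This is exactly the argument of Corollary~\ref{cor:transport}: take $\ell(\hat y,y,x):=D(\hat y,y)$ and $\widehat h_s:=h$, and let $\mu_s,\mu_t$ be the laws of $(x,z)$ and $(x,z')$ with $x\sim\nu$. These two laws share the $X$-marginal $\nu$ by construction. As the remark preceding the claim notes, the corollary's proof uses only that the evaluated object is a measurable function of $X$, which holds for any fixed $h$.

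The only points needing care are measure-theoretic bookkeeping, not substance, and I do not expect a real obstacle. First, the identity $T(\pi(x,z),x)=r_x(x)$ holds only $\nu$-almost everywhere, because membership in $\mathcal E'$ is assumed only $\nu$-a.e.; altering an integrand on a null set does not change the integral. Second, the integrand must be measurable and its expectation well defined. Since $D\ge 0$, the expectation is always defined in $[0,\infty]$, so the equality holds even when both sides are infinite. Measurability of $x\mapsto D(h(x),r_x(x))$ follows from measurability of $h$ and $r_x$ together with measurability of $D$, which I would state as a standing convention, as in Corollary~\ref{cor:transport}. Finally, I would identify the environment $(x,z)$ with $e$ such that $X(e)=x$ and $Z(e)=z$, as the decomposition in Assumption~\ref{ass:payoff_relevant} permits.
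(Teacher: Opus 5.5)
Your proof is correct and follows the paper's route. The paper defines $\mu_z$ as the law of $(x,z)$ with $x\sim\nu$, notes $X_\#\mu_z=\nu$, and applies Corollary~\ref{cor:transport} to the fixed $h$ with $\ell(\widehat y,y,x)=D(\widehat y,y)$; your direct integration of $D(h(x),r_x(x))$ just unfolds that corollary's proof, and in doing so it also handles explicitly the $\nu$-a.e.\ membership in $\mathcal E'$ and the possibility of infinite risk, which the corollary's ``finite expectations'' hypothesis glosses over.
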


\begin{proof}
For each \(z\in\mathcal Z'\), let \(\mu_z\) be the probability measure on
\(\mathcal E'\) induced by drawing \(x\sim\nu\) and setting \(e=(x,z)\). Then
\(X_\#\mu_z=\nu\) for every \(z\in\mathcal Z'\). Apply
Corollary~\ref{cor:transport} to the fixed \(X\)-only representation \(h\), with
\[
\ell(\widehat y,y,x)=D(\widehat y,y),
\]
and with source and target measures \(\mu_z\) and \(\mu_{z'}\). The corollary
implies exactly
\[
\mathbb E_{x\sim\nu}
\left[
D\left(h(x),T\!\left(\pi(x,z),x\right)\right)
\right]
=
\mathbb E_{x\sim\nu}
\left[
D\left(h(x),T\!\left(\pi(x,z'),x\right)\right)
\right],
\]
which is \(J_z(h)=J_{z'}(h)\).
\end{proof}

\begin{claim}[\(Z\)-invariance of the \(X\)-only projection under alignment]
\label{lem:projection-invariance}
Suppose Assumptions~\ref{ass:payoff_relevant} and~\ref{ass:unique} hold. Let
\(\mathcal Z'\subseteq\mathcal Z\) be a collection of frames such that
\((x,z)\in\mathcal E'\) for \(\nu\)-almost every \(x\) and every
\(z\in\mathcal Z'\). If \(\pi\) is aligned on \(\mathcal E'\), then:
\begin{enumerate}
    \item \(\widehat h_z\) is the same for every \(z\in\mathcal Z'\).
    \item For any source set \(S\subseteq\mathcal Z'\) and any target frame
    \(z'\in\mathcal Z'\), the pooled-source projection equals the target-frame
    projection:
    \[
    \widehat h_S=\widehat h_{z'}.
    \]
\end{enumerate}
\end{claim}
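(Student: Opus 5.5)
The plan is to chain Claim~\ref{cl:invariance} with Claim~\ref{lem:fixed-risk-invariance}, so that the whole projection objective becomes frame-invariant, and then use uniqueness of the minimizer in Definition~\ref{def:projection} to pass from equal objectives to equal argmins. First, since \(\pi\) is aligned on \(\mathcal E'\) and Assumptions~\ref{ass:payoff_relevant}--\ref{ass:unique} hold, Claim~\ref{cl:invariance} gives that \(\pi\) is \(T\)-portable on \(\mathcal E'\) for the fixed summary \(T\), with reduced summary \(r_x(x)=T(B(x),x)\). The standing hypothesis that \((x,z)\in\mathcal E'\) for \(\nu\)-a.e.\ \(x\) and every \(z\in\mathcal Z'\) is exactly what lets us build the measures \(\mu_z\) (draw \(x\sim\nu\), set \(e=(x,z)\)) supported on \(\mathcal E'\). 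Claim~\ref{lem:fixed-risk-invariance} then applies and yields \(J_z(h)=J_{z'}(h)\) for every \(h\in\mathcal H\) and all \(z,z'\in\mathcal Z'\). A null set of \(x\) on which \((x,z)\notin\mathcal E'\) does not affect the expectations.

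For part 1, the functionals \(J_z\) and \(J_{z'}\) are identical as maps \(\mathcal H\to\mathbb R_+\cup\{\infty\}\), so they have the same set of minimizers. Since each \(\widehat h_z\) is assumed to exist and be unique, \(\widehat h_z=\widehat h_{z'}\). In fact both equal the minimizer of the common objective \(J^\ast(h):=\mathbb E_{x\sim\nu}[D(h(x),r_x(x))]\).

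For part 2, I need the definition of the pooled-source projection. Since it is not fixed in the text, I will take it to be
\[
\widehat h_S:=\arg\min_{h\in\mathcal H} J_S(h),\qquad J_S(h):=\sum_{z\in S} w_z\,J_z(h),
\]
with nonnegative weights summing to one, for example proportional to the source sample sizes. For a general \(S\) I would use a probability measure \(\rho\) on \(S\) and write \(J_S(h)=\int J_z(h)\,d\rho(z)\). Equivalently, \(J_S\) is the risk under the mixture measure on \(\mathcal E'\), whose \(X\)-marginal is still \(\nu\); this lets Corollary~\ref{cor:transport} be applied directly to that mixture. Either way, substituting \(J_z=J^\ast\) gives \(J_S=J^\ast=J_{z'}\) on all of \(\mathcal H\), and uniqueness of the minimizer gives \(\widehat h_S=\widehat h_{z'}\).

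The main obstacle is not analytic but definitional. I must pin down \(\widehat h_S\) so that it is a minimizer of a mixture of the frame risks \(J_z\) with a common \(X\)-marginal \(\nu\), and I must assume that minimizer exists and is unique, as Definition~\ref{def:projection} does for single frames. If the empirical pooling instead used frame-specific \(X\)-marginals, the argument would need those marginals to coincide with \(\nu\); I would state that requirement explicitly. Measurability of \(z\mapsto J_z(h)\) under the mixture is trivial here, since that map is constant.
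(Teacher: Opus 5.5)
Your proposal is correct and follows the paper's proof essentially step for step. Alignment plus uniqueness gives $T$-portability via Claim~\ref{cl:invariance}, Claim~\ref{lem:fixed-risk-invariance} makes $J_z$ the same function of $h$ across $\mathcal Z'$, uniqueness yields a common $\widehat h_z$, and the pooled objective, a weighted average of this common objective (the paper's $\omega_S$-weighted average, which it leaves undefined just as you note), has the same minimizer. One small simplification: since $J_S\equiv J_{z'}$ on $\mathcal H$, existence and uniqueness of the pooled minimizer already follow from the single-frame assumption in Definition~\ref{def:projection}, so you need not assume them separately.
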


\begin{proof}
By Claim~\ref{cl:invariance}, alignment and uniqueness imply \(T\)-portability
for the behavioral summary used here. Claim~\ref{lem:fixed-risk-invariance}
therefore implies that, for every fixed \(h\in\mathcal H\), the value \(J_z(h)\)
is the same for every \(z\in\mathcal Z'\). Thus the population objective
minimized in Definition~\ref{def:projection} is the same function of \(h\) in
every frame. Since the minimizer is unique, \(\widehat h_z\) is the same for all
\(z\in\mathcal Z'\).

The pooled-source objective is a \(\omega_S\)-weighted average of this same
common objective. Hence its unique minimizer is the same common projection, and
\(\widehat h_S=\widehat h_{z'}\) for every target frame
\(z'\in\mathcal Z'\).
\end{proof}

Claim~\ref{lem:projection-invariance} does not require the aligned behavioral
object \(T(B(x),x)\) to belong to \(\mathcal H\). When \(\mathcal H\) is
misspecified, the common projection is simply the best \(D\)-approximation to
\(T(B(\cdot),\cdot)\) within \(\mathcal H\) under the evaluation distribution
\(\nu\). Alignment still implies that this misspecified projection is invariant across frames.

What \(\mathcal H\) controls is resolution. A coarse hypothesis class may map two different frame-specific behavioral objects to the same projection in \(\mathcal H\). Thus, absence of detected projection
differences does not imply alignment. It may only mean that the chosen summary
\(T\) or hypothesis class \(\mathcal H\) is too coarse to see the dependence on \(Z\).

For our linear classes under squared-error loss, however,
detections are monotone in a useful direction: once a population difference is
visible in a smaller class, it cannot disappear in any larger nested class.

\section{Empirical Total-Variation Estimator}
\label{app:tv-estimator}

This appendix defines the finite-sample TV estimator used in Figure~\ref{fig:tv_cot_grid}.
Our estimator estimates the total variation distance, computed in the target
environment, between two empirical joint distributions: one induced by the source-trained behavioral predictor transported to the target, and one induced by the target-trained benchmark predictor.

Fix a game, model, prompting regime, and one sampled source--target split. Let \(\mathcal S\) be the sampled source-environment set and let \(e'\) be the held-out target environment. The source-trained predictor \(\hat\pi_{\mathcal S}\) is fit on the pooled training data from \(\mathcal S\), while the target-trained predictor \(\hat\pi_{e'}\) is fit on the target-environment training split. Both
predictors are then evaluated on the same held-out target test set
\[
    T_{e'}=\{(x_i,a_i)\}_{i=1}^{n},
\]
where \(x_i\) is the payoff-relevant feature vector and \(a_i\) is the extracted LLM action in the target environment. In the main OLS specification the fitted predictions are continuous scalar action predictions,
\[
    \hat y_i^{\mathrm{out}}=\hat\pi_{\mathcal S}(x_i),
    \qquad
    \hat y_i^{\mathrm{in}}=\hat\pi_{e'}(x_i).
\]
Thus, before computing a discrete empirical TV distance, we discretize only the prediction coordinate. The realized action coordinate \(a_i\) is left on its empirical discrete support.

Let \(B=20\) be the default number of prediction bins. For split \(s\), form the pooled set of transported and target-trained predictions,
\[
    \mathcal R_s
    =
    \{\hat y_i^{\mathrm{out}}:i=1,\ldots,n\}
    \cup
    \{\hat y_i^{\mathrm{in}}:i=1,\ldots,n\}.
\]
We compute empirical quantile cutpoints of \(\mathcal R_s\) at
\(\{0,1/B,\ldots,1\}\), drop duplicate cutpoints, and use the resulting common partition for both predictors. If all predictions are identical, the procedure collapses to a single bin. Let \(b_s(y)\) denote the bin index assigned to a prediction \(y\). The two empirical joint distributions are
\[
    \widehat P_s(a,b)
    =
    \frac{1}{n}\sum_{i=1}^{n}
    \mathbf 1\{a_i=a,\ b_s(\hat y_i^{\mathrm{out}})=b\},
\]
and
\[
    \widehat Q_s(a,b)
    =
    \frac{1}{n}\sum_{i=1}^{n}
    \mathbf 1\{a_i=a,\ b_s(\hat y_i^{\mathrm{in}})=b\}.
\]
The reported split-level TV statistic is
\[
    \widehat{\mathrm{TV}}_s^{(B)}
    =
    \frac{1}{2}
    \sum_{(a,b)\in \mathcal A_s\times\mathcal B_s}
    \left|
        \widehat P_s(a,b)-\widehat Q_s(a,b)
    \right|,
\]
where \(\mathcal A_s\) is the set of realized actions in the held-out target test fold and \(\mathcal B_s\) is the set of nonempty prediction bins. Equivalently, \(\widehat{\mathrm{TV}}_s^{(B)}\) is the largest possible difference in expected value over all functions \(f(a,b)\in[0,1]\) that depend on the target action and
the binned prediction. The superscript \(B\) emphasizes that this is a
discretized TV estimator. It gives the exact empirical TV distance on the finite partition generated by the observed action values and the \(B\) prediction bins; it should not be read as an unbinned density estimate for the continuous prediction coordinate. Finally, the violin plots report the empirical distribution of \(\widehat{\mathrm{TV}}_s^{(20)}\) across the sampled source--target environment splits. 

\section{Predictive Portability: Log-RMSE}
\label{app:log-rmse}

Figure~\ref{fig:rmse_cot_grid} reports the predictive-transfer version of the
main portability exercise. The results support the same substantive conclusion
as the TV analysis: payoff-equivalent reframing often degrades out-of-environment
prediction. Pooling across games and models, the mean log-RMSE gap is \(0.010\)
under CoT and \(0.030\) under non-CoT, so CoT reduces the average predictive gap
by \(0.020\). As in the TV analysis, the smallest average gap appears in Lottery
Choice (\(0.003\)), while the largest gaps appear in Ultimatum (\(0.035\)),
Trust (\(0.029\)), and Normal Form (\(0.026\)).

The log-RMSE results are useful because they translate portability loss into a
familiar predictive score. The TV results remain the main specification because
they are loss-agnostic and bound all downstream bounded criteria on the
prediction--action pair. The fact that both metrics point in the same direction
is evidence that the measured portability failures are not an artifact of a
particular scoring rule.

\section{Robustness to Nonlinear Behavioral Predictors}
\label{app:nonlinear-robustness}

The main results use OLS behavioral representations. OLS is the appropriate
baseline because it is transparent, low-variance, and directly interpretable as
a response surface over payoff-relevant variables. It asks whether a simple
behavioral mapping learned from \(X(e)\) in one textual environment transports
to another payoff-equivalent environment. This makes the main estimates easy to
interpret: a portability failure under OLS means that even a parsimonious
behavioral representation is not stable across surface forms.

Figures~\ref{fig:robustness_tv_overview}--\ref{fig:robustness_rmse_pooled}
repeat the analysis with richer predictors: shallow XGBoost,
high-regularization XGBoost, degree-3 polynomial regression, and a random forest
with 25 trees, depth 3, and minimum leaf size 20. Nonlinear models can represent
interactions and thresholds that OLS cannot, but they do not remove portability
failures. Average TV remains substantial under all estimators: \(0.372\) for
polynomial regression, \(0.472\) for shallow XGBoost, \(0.494\) for
high-regularization XGBoost, and \(0.611\) for the random forest. The log-RMSE
gaps are smaller in absolute magnitude but also positive on average:
\(0.019\) for high-regularization XGBoost, \(0.021\) for the random forest,
\(0.022\) for polynomial regression, and \(0.023\) for shallow XGBoost. Thus,
the qualitative conclusion is not an artifact of imposing linearity; if
anything, more flexible predictors often reveal larger distributional
differences between transported and target-trained representations.

\section{Additional Mechanism Diagnostics}
\label{app:parameter-diagnostics}

The parameter-predictability exercise asks whether model choices are explained primarily by payoff-relevant variables \(X\), or whether payoff-irrelevant environment labels retain predictive power after conditioning on \(X\). For each
game \(g\), model \(m\), and prompting regime \(r\), let \(a_i\) denote the extracted numeric choice in observation \(i\), let \(x_i\in\mathbb{R}^{p_g}\) denote the game-specific payoff variables, and let \(e_i\) denote the textual environment. We estimate two held-out predictive models. The first uses only
payoff-relevant variables:
\[
    a_i = \alpha + x_i'\beta + \varepsilon_i .
\]
The second augments the same payoff variables with one-hot indicators for the
textual environment:
\[
    a_i = \alpha + x_i'\beta + \sum_{e\in\mathcal{E}_g}\gamma_e
    \mathbf{1}\{e_i=e\} + \varepsilon_i .
\]
Both specifications are estimated as ridge regressions with the same regularization parameter. We use ridge rather than unregularized least squares only to stabilize the environment-dummy specification when many environment
indicators are included; the object of interest is the comparison between the two feature sets, not the coefficient estimates themselves.

We evaluate both models out of sample. Within each \((g,m,r)\) cell, we repeat a random train/test split many times, splitting separately within each environment so that the \(X+Z\) model is evaluated on environments whose indicator columns are present in training. For split \(s\), let \(R^2_{X,s}\) be the held-out \(R^2\) from the payoff-only model and \(R^2_{XZ,s}\) the held-out \(R^2\) from the payoff-plus-environment model. We report
\[
    \Delta R^2_{\mathrm{env},s}
    = R^2_{XZ,s} - R^2_{X,s},
\]
and summarize by averaging \(R^2_{X,s}\) and
\(\Delta R^2_{\mathrm{env},s}\) over repeated splits. A high \(R^2_X\) means
that behavior is predictable from payoff-relevant structure alone. A large
positive \(\Delta R^2_{\mathrm{env}}\) means that the textual environment still
explains residual variation in choices after conditioning on the payoff
variables, which is direct evidence of frame dependence in the \(X/Z\)
decomposition.

Figure~\ref{fig:parameter_predictability} decomposes behavior into the part
explained by payoff-relevant variables and the additional part explained by
environment indicators. On average, adding environment indicators increases
\(R^2\) by \(0.076\) under non-CoT and by \(0.034\) under CoT. The largest
non-CoT increment appears in Normal Form (\(0.184\)), while the corresponding
CoT increment is much smaller (\(0.017\)). This supports the interpretation that
CoT often reduces, but does not eliminate, the residual dependence of behavior
on payoff-irrelevant wording.

\subsection*{Reasoning diagnostics}

The aggregate portability results leave open what changes inside the reasoning
process when CoT helps or hurts. We address this descriptively using numeric
faithfulness, lexical similarity across environments, and reasoning motifs.
Figure~\ref{fig:numeric_faithfulness} shows that CoT responses often engage
directly with the numerical structure of the task, but coverage is uneven across
games. Average parameter coverage is essentially complete in Lottery Choice
(\(1.000\)) and Public Goods (\(0.993\)), high in Trust (\(0.964\)) and Normal
Form (\(0.889\)), but lower in Dictator (\(0.792\)), Ultimatum (\(0.783\)), and
Beauty Contest (\(0.727\)). DeepSeek is especially numerically grounded in the
three games for which we collected it: its parameter coverage is \(1.000\) in
Lottery Choice, \(0.965\) in Beauty Contest, and approximately \(1.000\) in
Normal Form. This ordering mirrors the main portability results: tasks whose
reasoning text more consistently mentions the payoff-relevant quantities tend to
be easier to represent as stable functions of \(X(e)\).

Figure~\ref{fig:jaccard_cot} speaks directly to frame sensitivity. Across all
seven games, within-environment Jaccard similarity exceeds between-environment
Jaccard similarity. Thus, even when two prompts implement the same payoff
structure, the reasoning text produced under one environment is more similar to
reasoning generated under that same environment than to reasoning generated
under an alternative payoff-equivalent environment. This pattern indicates that
CoT is not determined by payoff structure alone.

The motif analysis in Figure~\ref{fig:reasoning_motifs} helps clarify what these
reasoning traces represent. Lottery Choice is dominated by expected-value
(\(0.995\)) and risk/uncertainty language (\(0.982\)). Normal Form is dominated
by strategic-equilibrium reasoning (\(0.840\)). Public Goods and Ultimatum are
dominated by fairness language (\(0.959\) and \(0.888\), respectively), while
Beauty Contest is dominated by heuristic or anchoring language (\(0.847\)) with
substantial strategic-equilibrium language as well (\(0.659\)). DeepSeek
strengthens these structural motifs: it uses expected-value and risk/uncertainty
language in all sampled Lottery Choice responses, and strategic-equilibrium
language in all sampled Normal Form and Beauty Contest responses. CoT is
therefore often numerically grounded and game-appropriate, but still
frame-sensitive.

\subsection*{DeepSeek and Nash support}

The pooled CoT results show that eliciting explicit reasoning does not
mechanically improve portability. DeepSeek-R1 provides a sharper case study:
its reasoning mode allocates more of the response process to parsing the
decision problem, identifying payoff-relevant quantities, and checking how they
enter the choice. The green DeepSeek-R1 distributions in
Figure~\ref{fig:tv_cot_grid} compare DeepSeek to the non-DeepSeek CoT frontier
in Lottery Choice, Beauty Contest, and Normal Form. DeepSeek lies near the best
observed portability frontier in all three: its average TV is \(0.049\) in
Lottery Choice (versus \(0.025\) for Gemma-3-12B), \(0.129\) in Beauty Contest
(versus \(0.161\) for Llama-8B), and \(0.049\) in Normal Form (versus \(0.031\)
for GPT-4.1-nano).

The supporting descriptive evidence is consistent with the interpretation that
reasoning helps when it recovers the latent payoff-relevant object \(X(e)\), so
that behavior depends less on payoff-irrelevant framing \(Z(e)\). DeepSeek
mentions at least one payoff-relevant parameter in every sampled response across
the three games, with mean parameter coverage of \(1.000\), \(0.965\), and
\(\approx 1.000\), respectively (Figure~\ref{fig:numeric_faithfulness}). Its
reasoning motifs are sharply game-specific (Figure~\ref{fig:reasoning_motifs}):
expected-value and risk language dominate Lottery Choice, strategic/equilibrium
language dominates Normal Form, and Beauty Contest combines strategic reasoning
with anchoring (\(0.990\)) and fairness (\(0.782\)) motifs.

Normal Form admits an additional, model-free diagnostic. Because each case
encodes an explicit payoff matrix, we can ask whether the chosen action lies in
the support of at least one Nash equilibrium, allowing both pure and mixed
equilibria; Appendix~\ref{app:nash-support} describes the calculation in detail.
We do not interpret Nash consistency as a welfare or performance criterion: the
statistic is a diagnostic for whether the model has represented the prompt as a
strategic interaction whose payoff matrix implies an equilibrium structure. On
this measure, DeepSeek selects a Nash-supported action in \(0.995\) of cases,
versus \(0.941\) for GPT-4.1-nano under CoT, \(0.729\)--\(0.824\) for the
remaining CoT models, and \(0.640\)--\(0.847\) for non-CoT baselines
(Figure~\ref{fig:nash_alignment}). Standard CoT can remain anchored in the
decision environment, but reasoning procedures that recover the human-relevant
\(X(e)\) bring models materially closer to invariant behavior.

\section{Nash-Support Diagnostic for Normal-Form Games}
\label{app:nash-support}

For each normal-form observation, we reconstruct the realized \(2\times2\) payoff matrix
\[
\begin{array}{c|cc}
 & A & B \\
\hline
A & (u_{AA},v_{AA}) & (u_{AB},v_{AB}) \\
B & (u_{BA},v_{BA}) & (u_{BB},v_{BB})
\end{array}
\]
and code the extracted row-player action as Nash-supported if it belongs to the support of at least one Nash equilibrium of that matrix. Let \(q\) be the probability that the column player chooses \(A\), and let \(p\) be the probability that the row player chooses \(A\). The row player's payoff difference between choosing \(A\) and \(B\) is
\[
\Delta_u(q)=q(u_{AA}-u_{BA})+(1-q)(u_{AB}-u_{BB})
= (u_{AA}-u_{AB}-u_{BA}+u_{BB})q + (u_{AB}-u_{BB}),
\]
and the column player's payoff difference between choosing \(A\) and \(B\) is
\[
\Delta_v(p)=p(v_{AA}-v_{AB})+(1-p)(v_{BA}-v_{BB})
= (v_{AA}-v_{AB}-v_{BA}+v_{BB})p + (v_{BA}-v_{BB}).
\]
Pure equilibria are checked using weak best-response inequalities, for example \((A,A)\) is a Nash equilibrium if \(u_{AA}\ge u_{BA}\) and \(v_{AA}\ge v_{AB}\), with analogous inequalities for the other three cells. Mixed-support equilibria are checked by solving the relevant indifference equations \(\Delta_u(q)=0\) and/or \(\Delta_v(p)=0\) on the open unit interval, also allowing cases in which one player is indifferent while the other player's pure action is a best response. Formally, for observation \(t\), we define
\[
S_t=\{i\in\{A,B\}:\exists (\sigma_r,\sigma_c)\in NE(u_t,v_t)
\text{ such that } \sigma_r(i)>0\},
\qquad
I_t=\mathbf{1}\{a_t\in S_t\},
\]
where \(a_t\) is the extracted model action. The reported statistic for each model and prompting regime is \(\frac{1}{N}\sum_{t=1}^N I_t\). We use support membership rather than selecting a particular equilibrium because the model supplies only its own action; the opponent's realized action or mixed strategy is not observed.

\section{Description of Each Game}\label{app:GameDescription}
This section describes each game and the payoff-relevant parameters that vary
across decision instances. For each game \(g\), we draw these parameters
independently from simple uniform ranges and then elicit model choices for the
resulting instance. Textual environments change the narrative wrapper but not
the action space or these payoff-relevant quantities.

\paragraph{Dictator game.}
A single allocator chooses an integer transfer \(a \in \{0,\dots,E\}\) from an
endowment \(E\) to a passive recipient and keeps \(E-a\). We draw
\[
E \sim \mathrm{Unif}[250, 2000].
\]

\paragraph{Ultimatum game.}
A proposer chooses an integer offer \(a \in \{0,\dots,E\}\) from a stake \(E\).
The counterpart can accept or reject the proposed split; if accepted, the
counterpart receives \(a\) and the proposer keeps \(E-a\), while rejection gives
both parties zero. The model is asked for the proposer's offer. We draw
\[
E \sim \mathrm{Unif}[50{,}000, 200{,}000].
\]

\paragraph{Trust game.}
A first mover chooses an integer amount \(a \in \{0,\dots,E\}\) to send to a
partner. The sent amount is multiplied by \(k\) before reaching the partner, who
may then return some amount. The model is asked for the first mover's sent
amount. We draw
\[
E \sim \mathrm{Unif}[1{,}000, 10{,}000],
\qquad
k \sim \mathrm{Unif}[2, 4].
\]

\paragraph{Public-goods game.}
In a four-person group, the model chooses an integer contribution
\(a \in \{0,\dots,E\}\) to a common pool. Every dollar contributed by any group
member is multiplied by \(m\), and the resulting total is split equally among
the four participants. We draw
\[
E \sim \mathrm{Unif}[300, 2000],
\qquad
m \sim \mathrm{Unif}[1.1, 2.5].
\]

\paragraph{Beauty-contest game.}
Each participant chooses an integer \(a \in \{0,\dots,M\}\). The winning choice
is the number closest to \(p\) times the average of all submitted numbers, and
the winner receives a prize \(E\). The model is asked for its chosen number. We
draw
\[
M \sim \mathrm{Unif}[10, 50],
\qquad
p \sim \mathrm{Unif}[0.3, 1.8],
\qquad
E \sim \mathrm{Unif}[1{,}000, 2{,}000].
\]

\paragraph{Lottery-choice task.}
The model chooses between two binary lotteries, \(A\) and \(B\). Lottery
\(j\in\{A,B\}\) pays a high payoff \(H_j\) with probability \(p_j\) and a low
payoff \(L_j\) otherwise. We draw the two lotteries independently:
\[
p_A,p_B \sim \mathrm{Unif}[0.05, 0.95],
\qquad
H_A,H_B \sim \mathrm{Unif}[2{,}000, 10{,}000],
\qquad
L_A,L_B \sim \mathrm{Unif}[0, 1{,}000].
\]

\paragraph{Normal-form \(2\times2\) game.}
The model is the row player in a simultaneous-move \(2\times2\) game and chooses
between actions \(A\) and \(B\). The column player also chooses \(A\) or \(B\).
For each cell \((i,j)\in\{A,B\}^2\), \(u_{ij}\) is the row player's payoff and
\(v_{ij}\) is the column player's payoff, both shown in the prompt in thousands
of dollars. We draw all eight payoff entries independently:
\[
u_{ij}, v_{ij} \sim \mathrm{Unif}[0, 100],
\qquad i,j \in \{A,B\}.
\]

\newpage

\section{Additional Figures}
\label{app:additional-figures}

\begin{figure}[H]
    \centering
    \includegraphics[width=\textwidth]{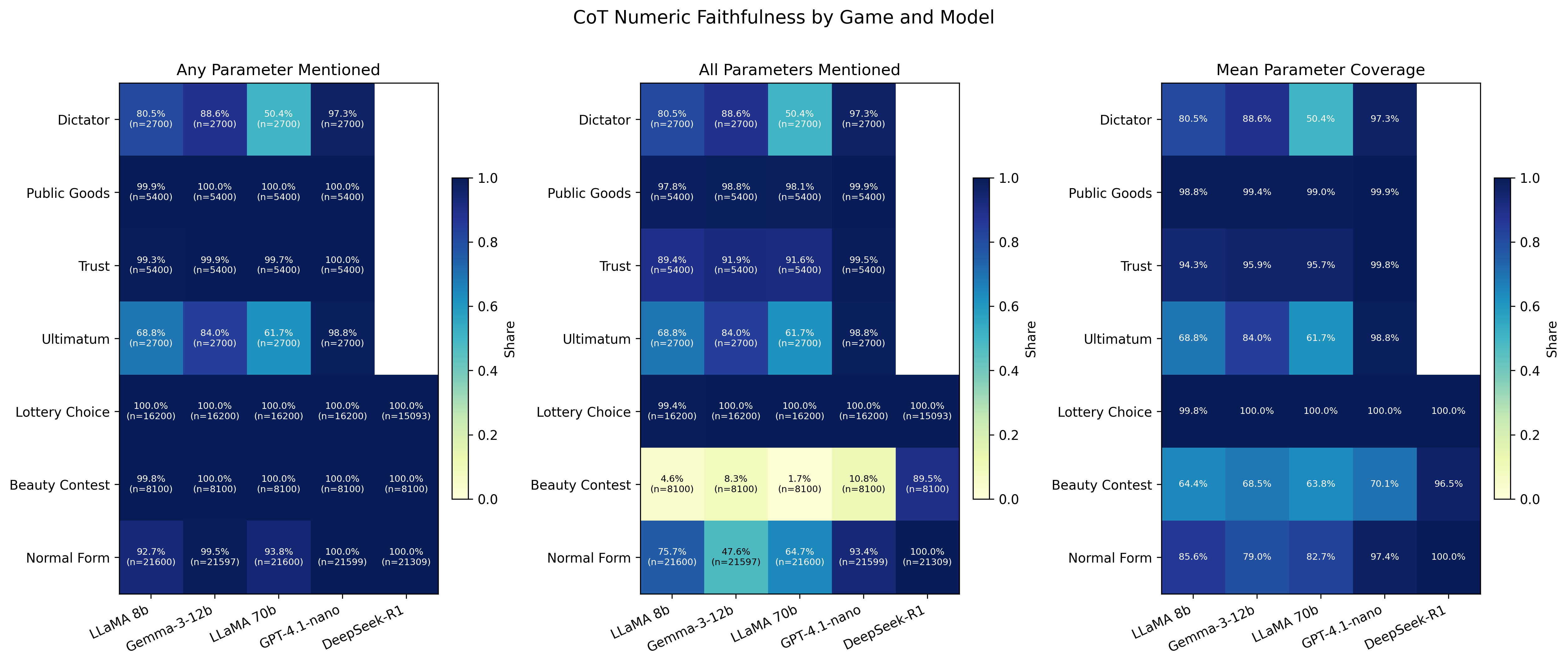}
    \caption{\textbf{Numeric grounding in chain-of-thought.}
    Share of CoT responses that mention payoff-relevant numerical parameters, by
    game and model. High coverage indicates that the reasoning text engages with
    the quantities defining the decision problem, while incomplete coverage leaves
    room for payoff-irrelevant framing to shape the representation.}
    \label{fig:numeric_faithfulness}
\end{figure}

\begin{figure}[H]
    \centering
    \includegraphics[width=\linewidth]{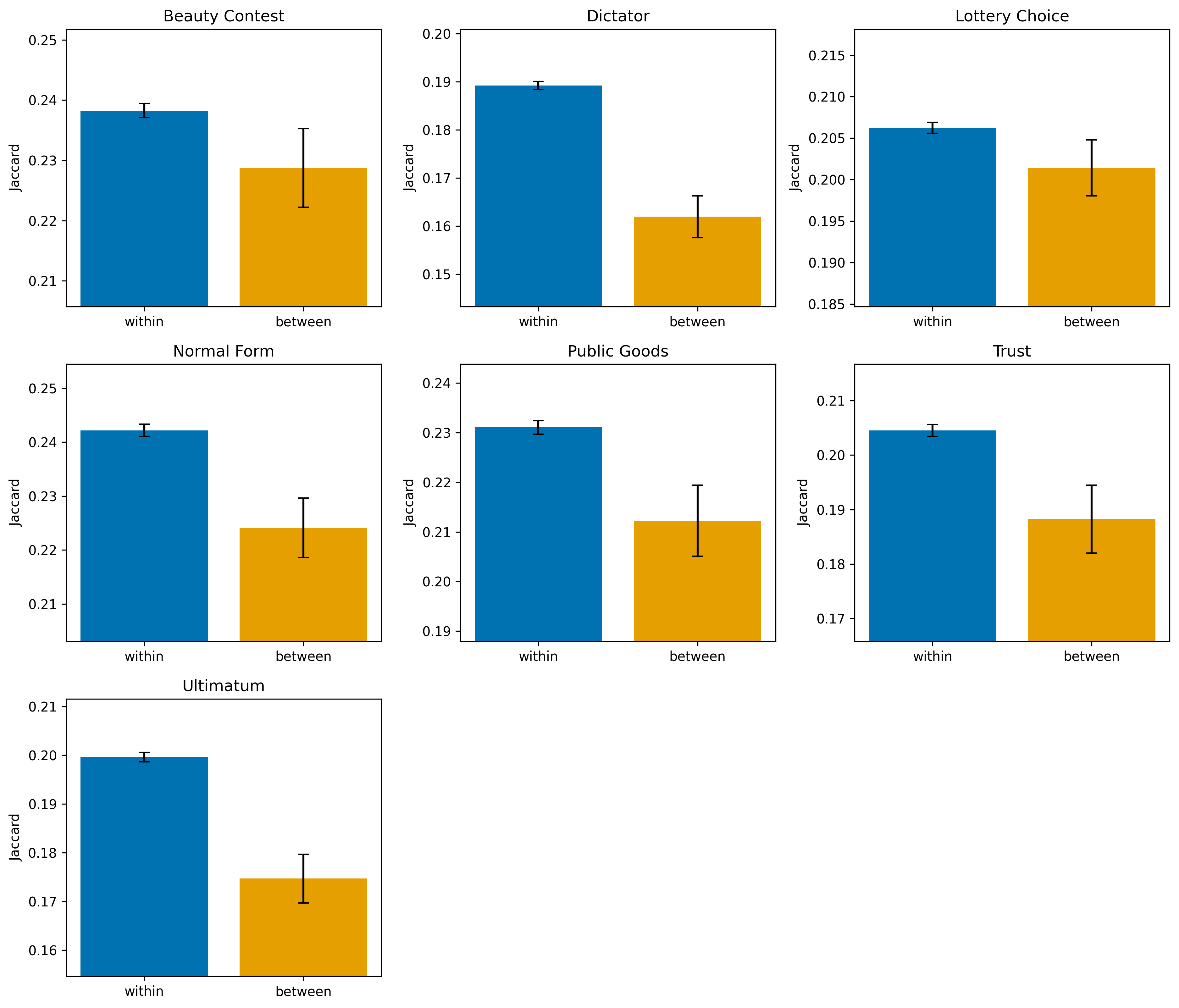}
    \caption{Mean Jaccard similarity (with standard error) of chain-of-thought token sets,
    comparing within-environment and between-environment pairs, by game.
    Higher within-environment similarity indicates greater environment-specific
    regularity in reasoning language despite payoff equivalence.
    }
    \label{fig:jaccard_cot}
\end{figure}

\begin{figure}[H]
    \centering
    \includegraphics[width=\textwidth]{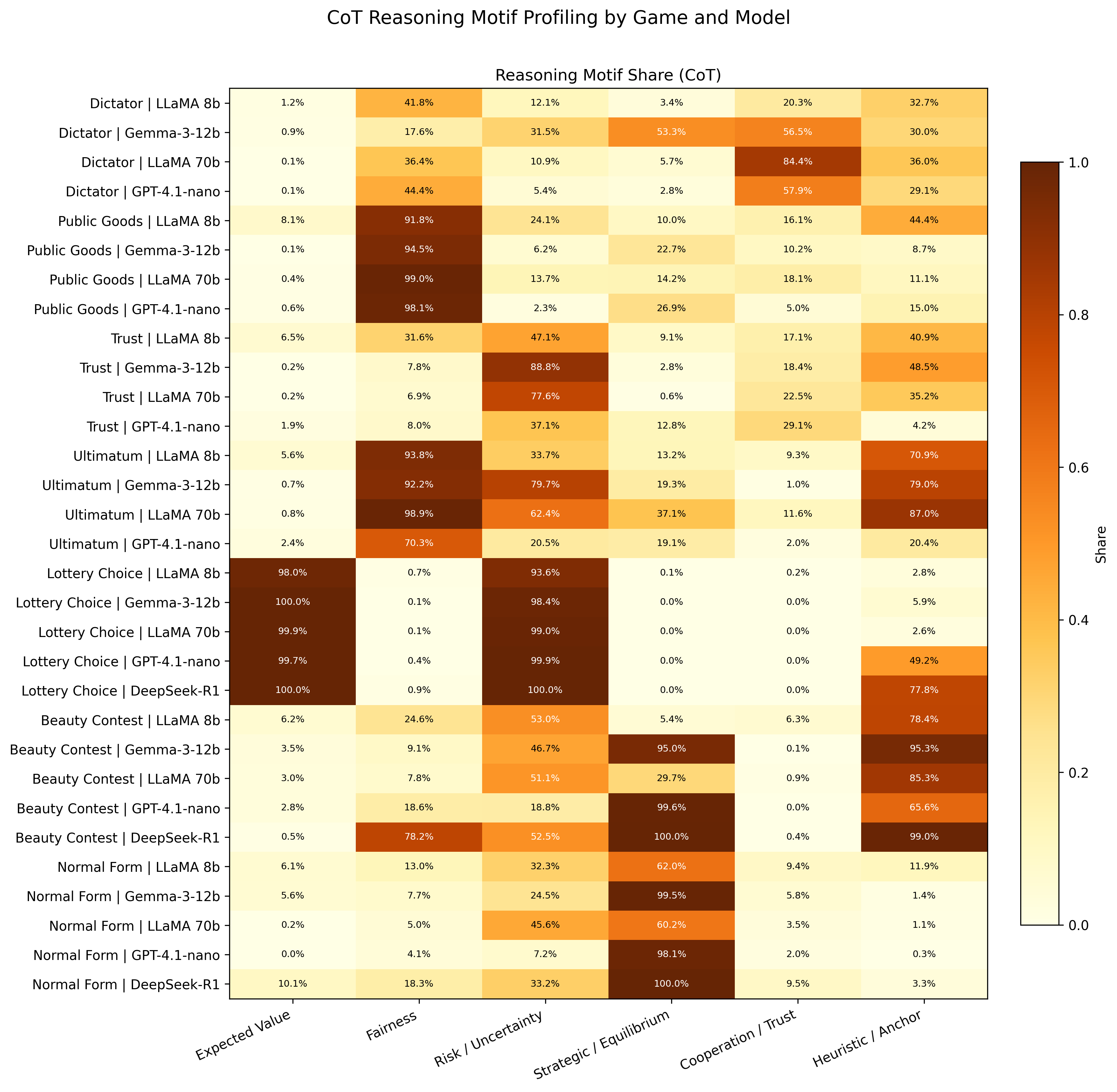}
    \caption{\textbf{Reasoning motifs by game and model.}
    Share of CoT responses containing each motif category. The motif profile
    reveals whether reasoning is organized around formal structure, such as
    expected value or strategic equilibrium, or around more frame-sensitive
    concepts, such as fairness, cooperation, and heuristics.}
    \label{fig:reasoning_motifs}
\end{figure}

\begin{figure}[H]
    \centering
    \includegraphics[width=\textwidth]{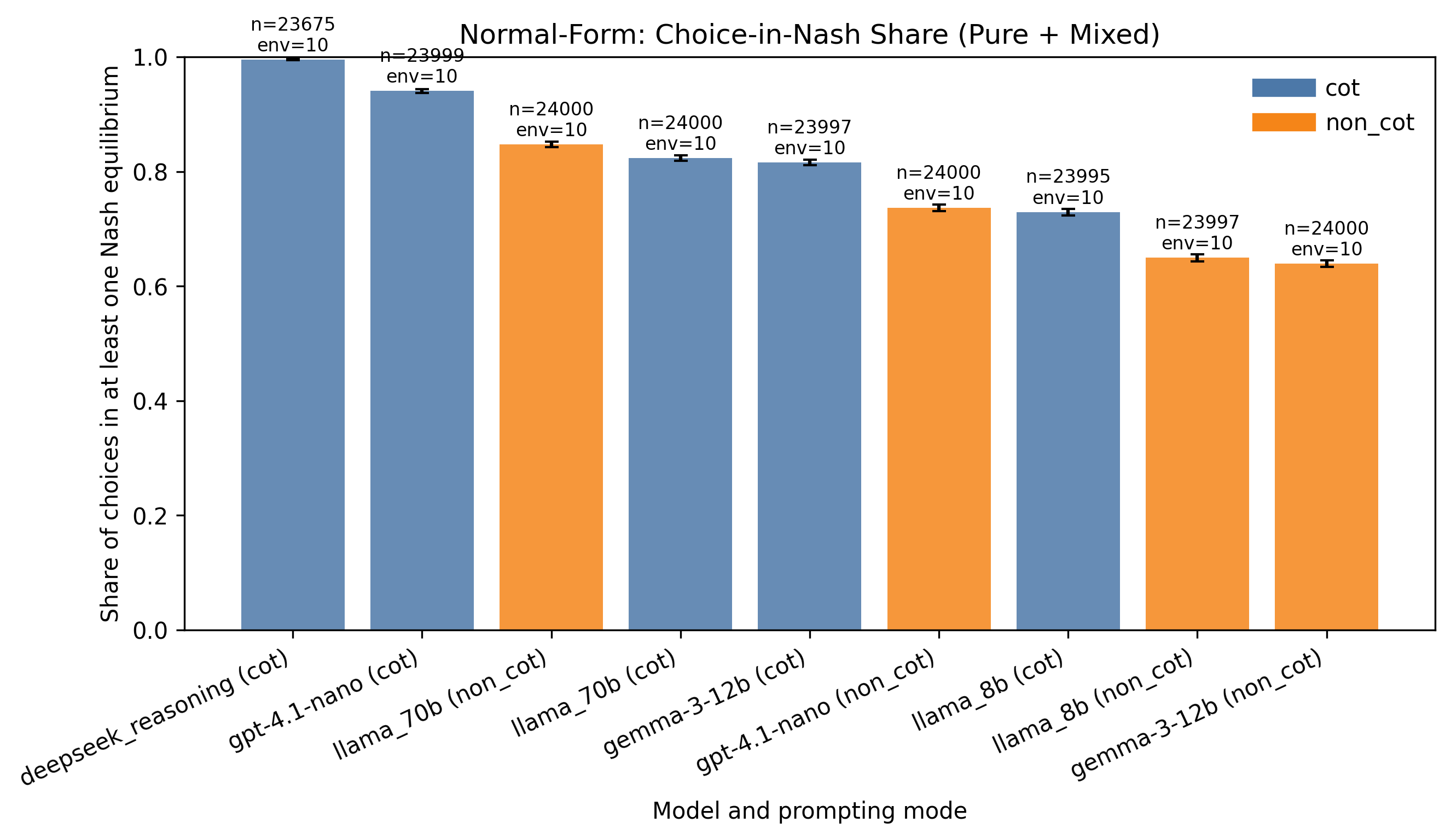}
    \caption{\textbf{Nash-supported choices in normal-form games.}
    Share of model choices that lie in the support of at least one Nash
    equilibrium, allowing pure and mixed equilibria. This diagnostic measures
    whether the model's action is compatible with an equilibrium representation
    of the underlying strategic problem.}
    \label{fig:nash_alignment}
\end{figure}

\begin{figure}[H]
    \centering
    \includegraphics[width=\textwidth]{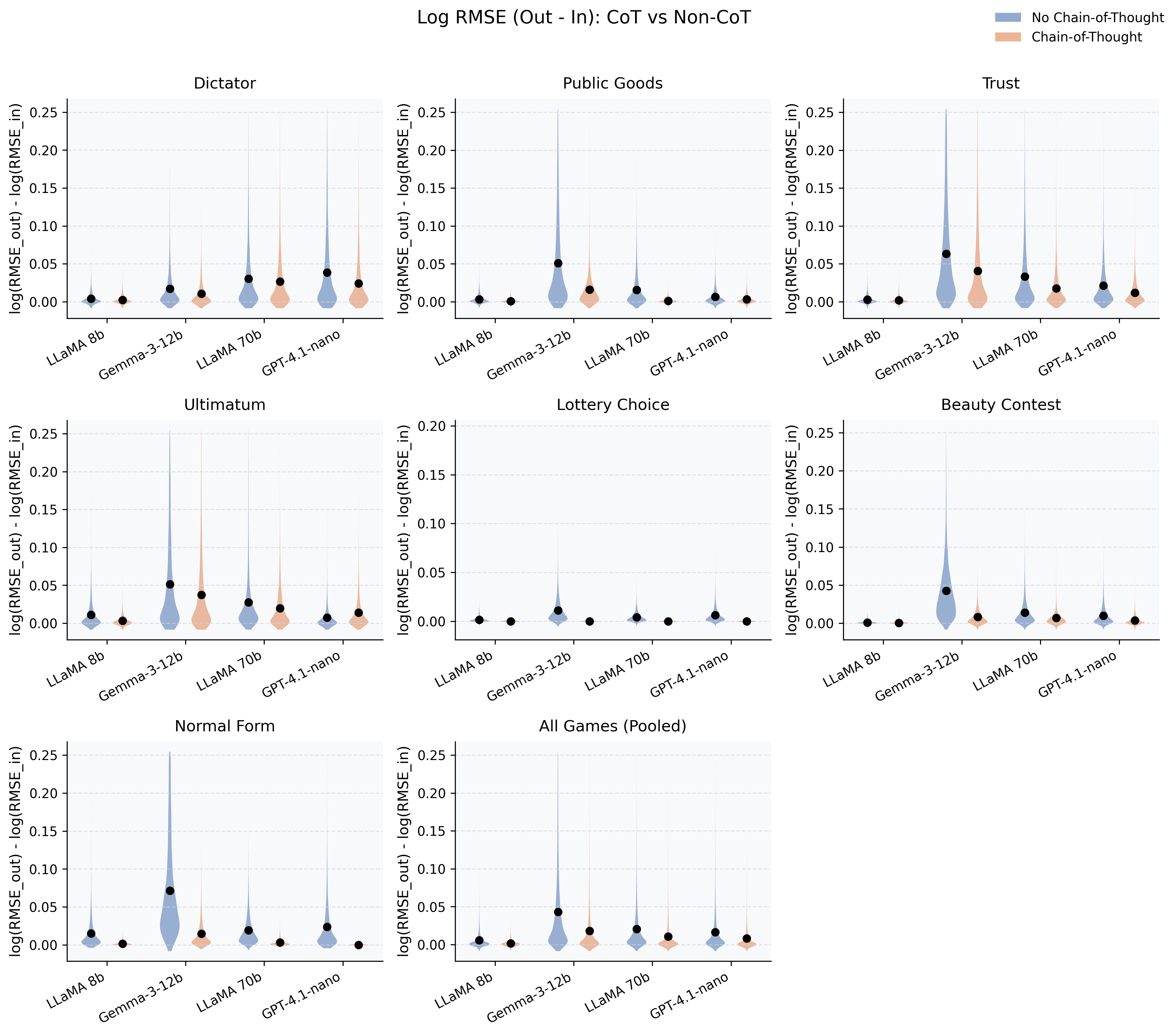}
    \caption{\textbf{Predictive portability gaps under prompt variation.}
    Distribution of $\log(\mathrm{RMSE}_{\text{out}})-\log(\mathrm{RMSE}_{\text{in}})$
    across random source--target environment resamplings, comparing non-CoT and
    CoT prompting. Values above zero indicate that a transported behavioral
    mapping predicts held-out target behavior worse than a target-trained
    benchmark.}
    \label{fig:rmse_cot_grid}
\end{figure}

\begin{figure}[H]
    \centering
    \includegraphics[width=\textwidth]{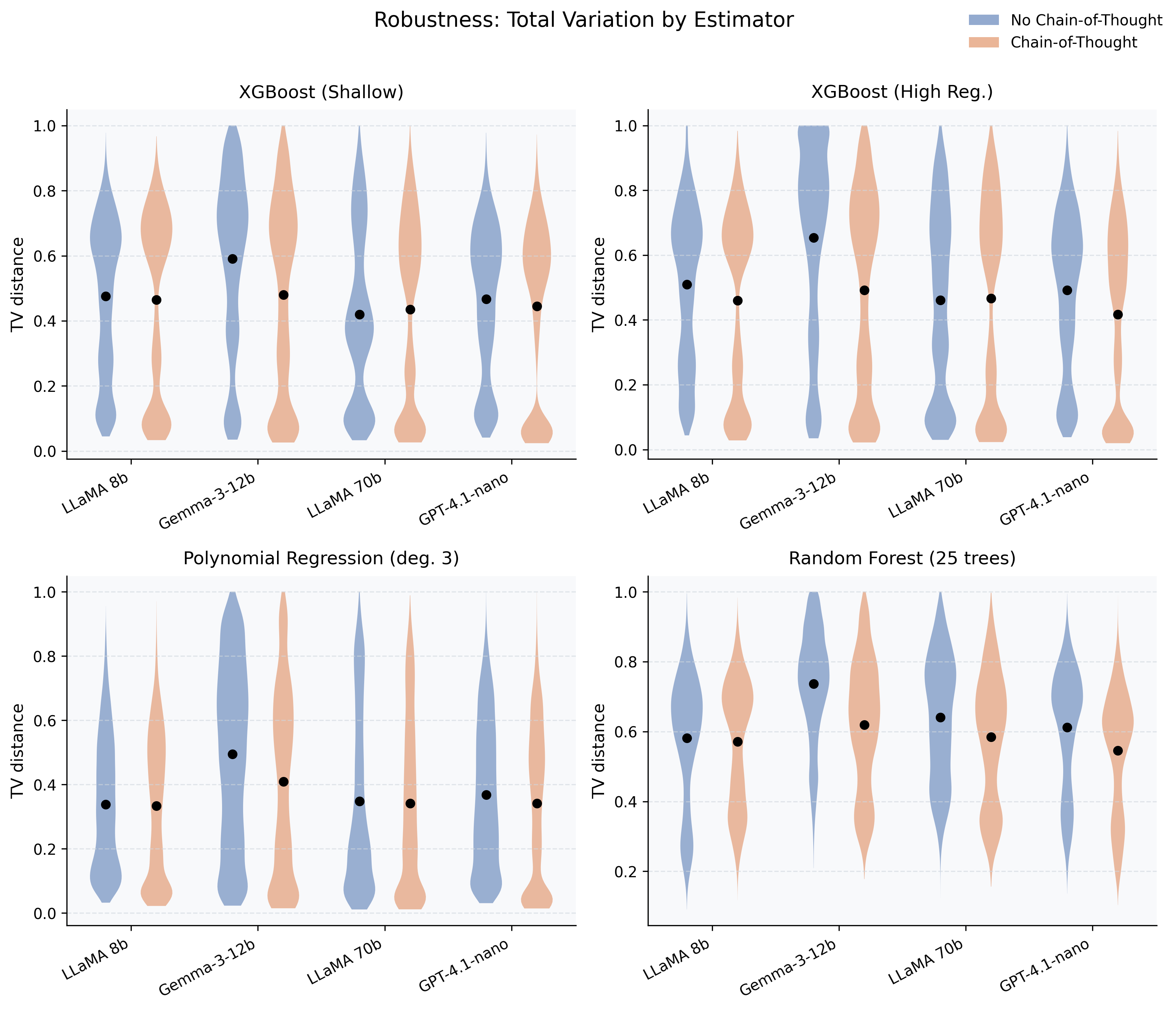}
    \caption{\textbf{TV robustness across behavioral predictors.}
    Total variation portability distributions estimated with nonlinear and
    regularized behavioral predictors.}
    \label{fig:robustness_tv_overview}
\end{figure}

\begin{figure}[H]
    \centering
    \includegraphics[width=\textwidth]{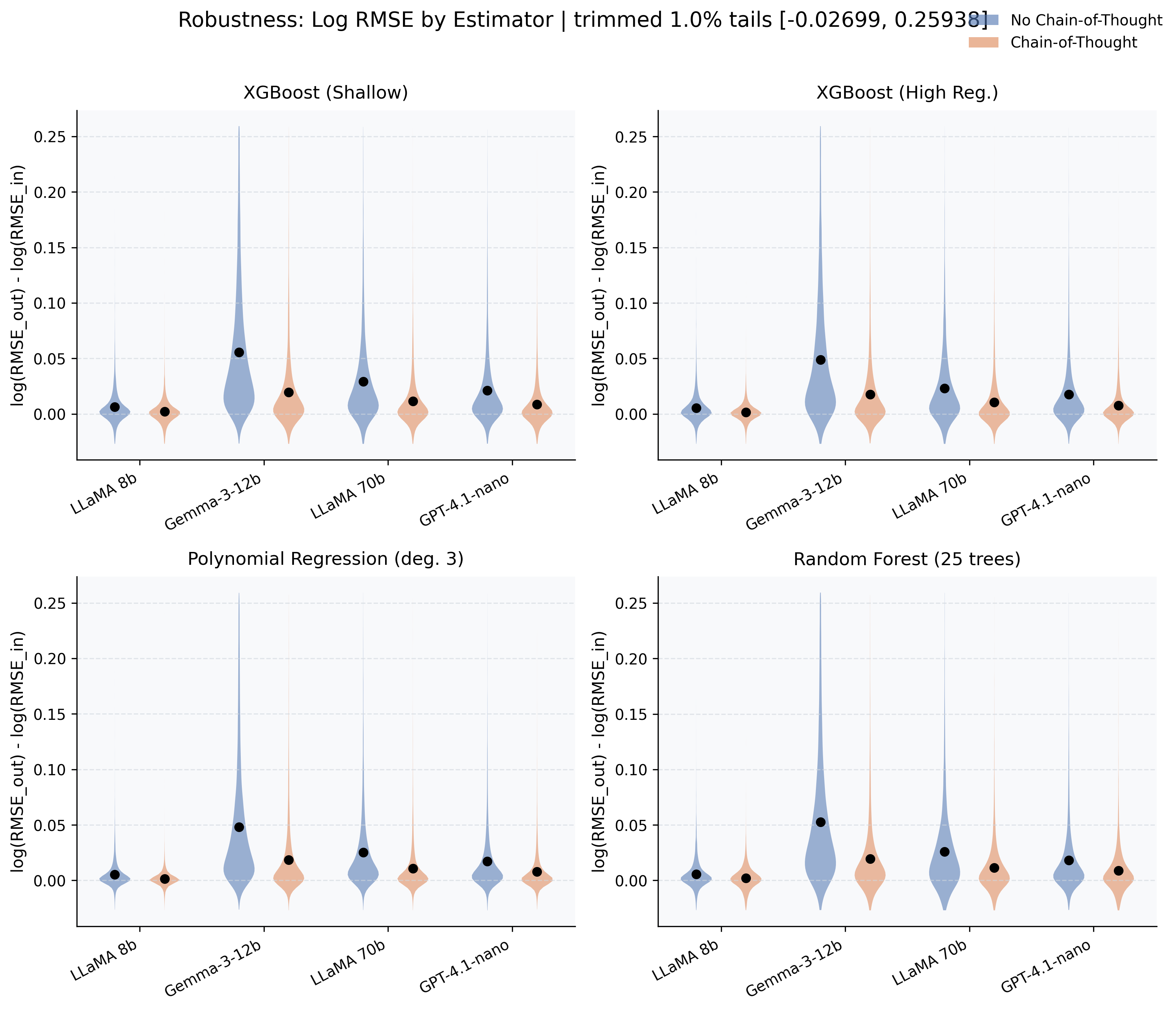}
    \caption{\textbf{Log-RMSE robustness across behavioral predictors.}
    Predictive portability gaps estimated with nonlinear and regularized
    behavioral predictors.}
    \label{fig:robustness_rmse_overview}
\end{figure}

\begin{figure}[H]
    \centering
    \begin{subfigure}[t]{0.48\textwidth}
        \centering
        \includegraphics[width=\linewidth]{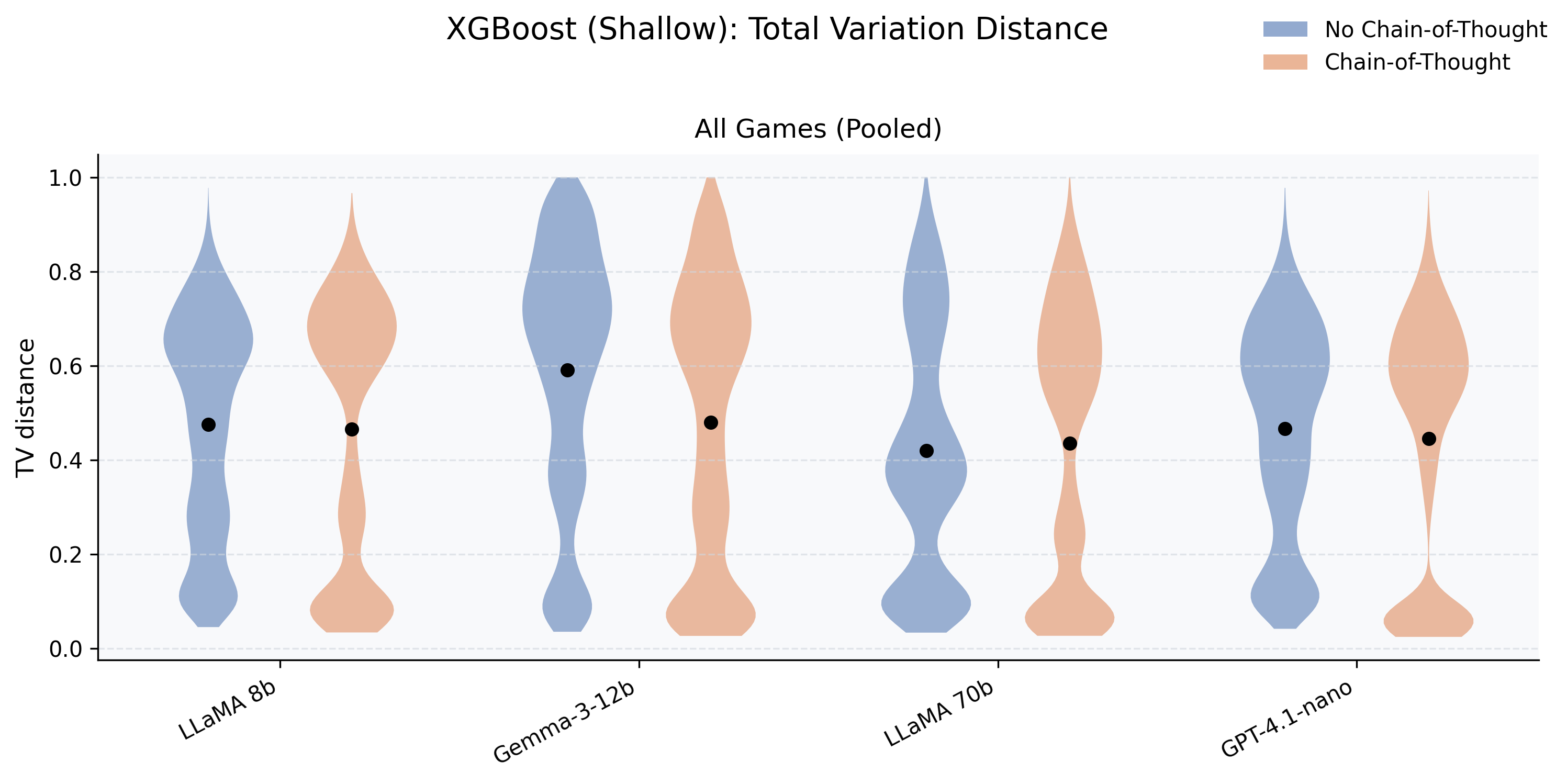}
        \caption{Shallow XGBoost}
    \end{subfigure}
    \hfill
    \begin{subfigure}[t]{0.48\textwidth}
        \centering
        \includegraphics[width=\linewidth]{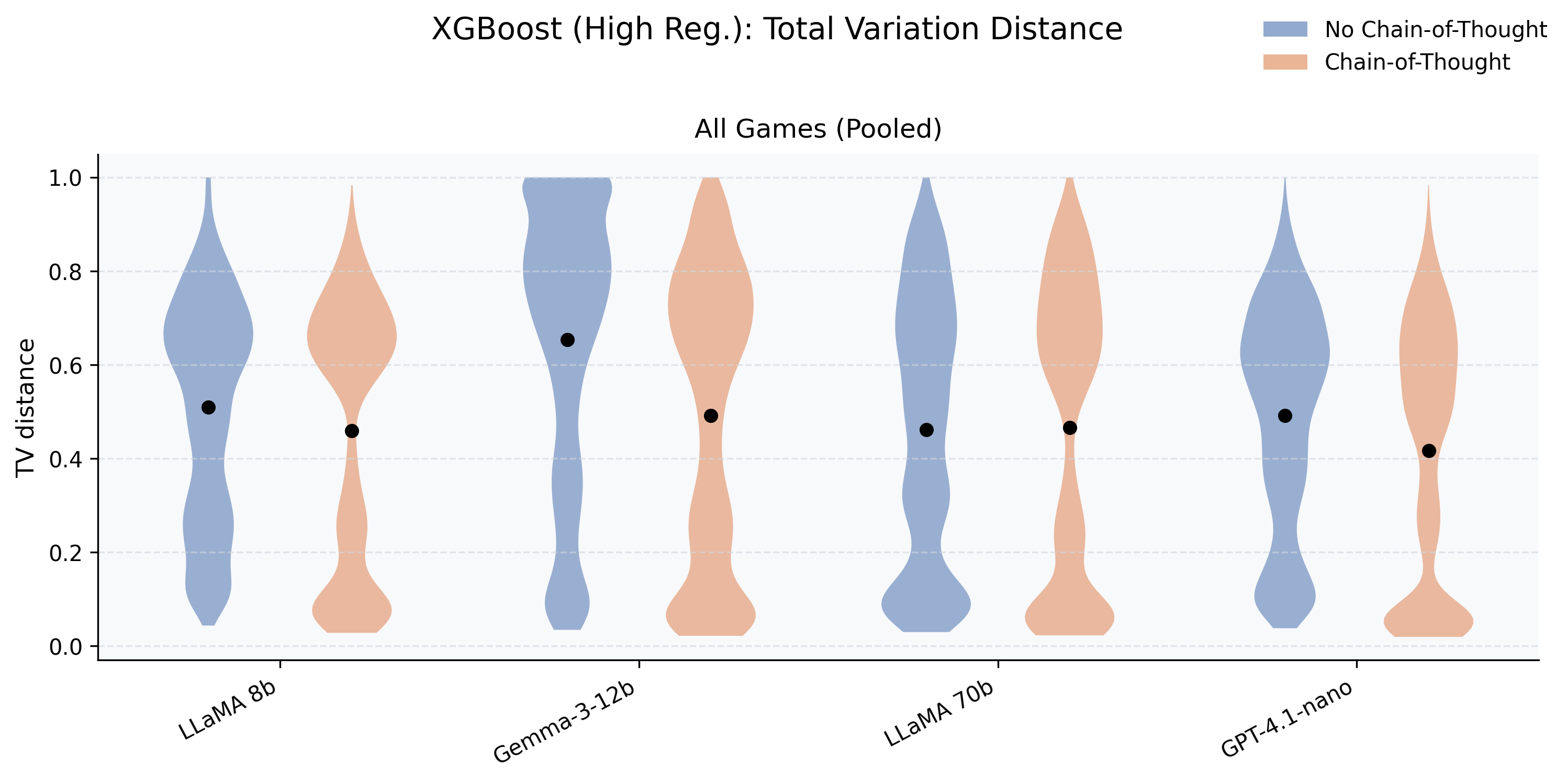}
        \caption{High-regularization XGBoost}
    \end{subfigure}
    \begin{subfigure}[t]{0.48\textwidth}
        \centering
        \includegraphics[width=\linewidth]{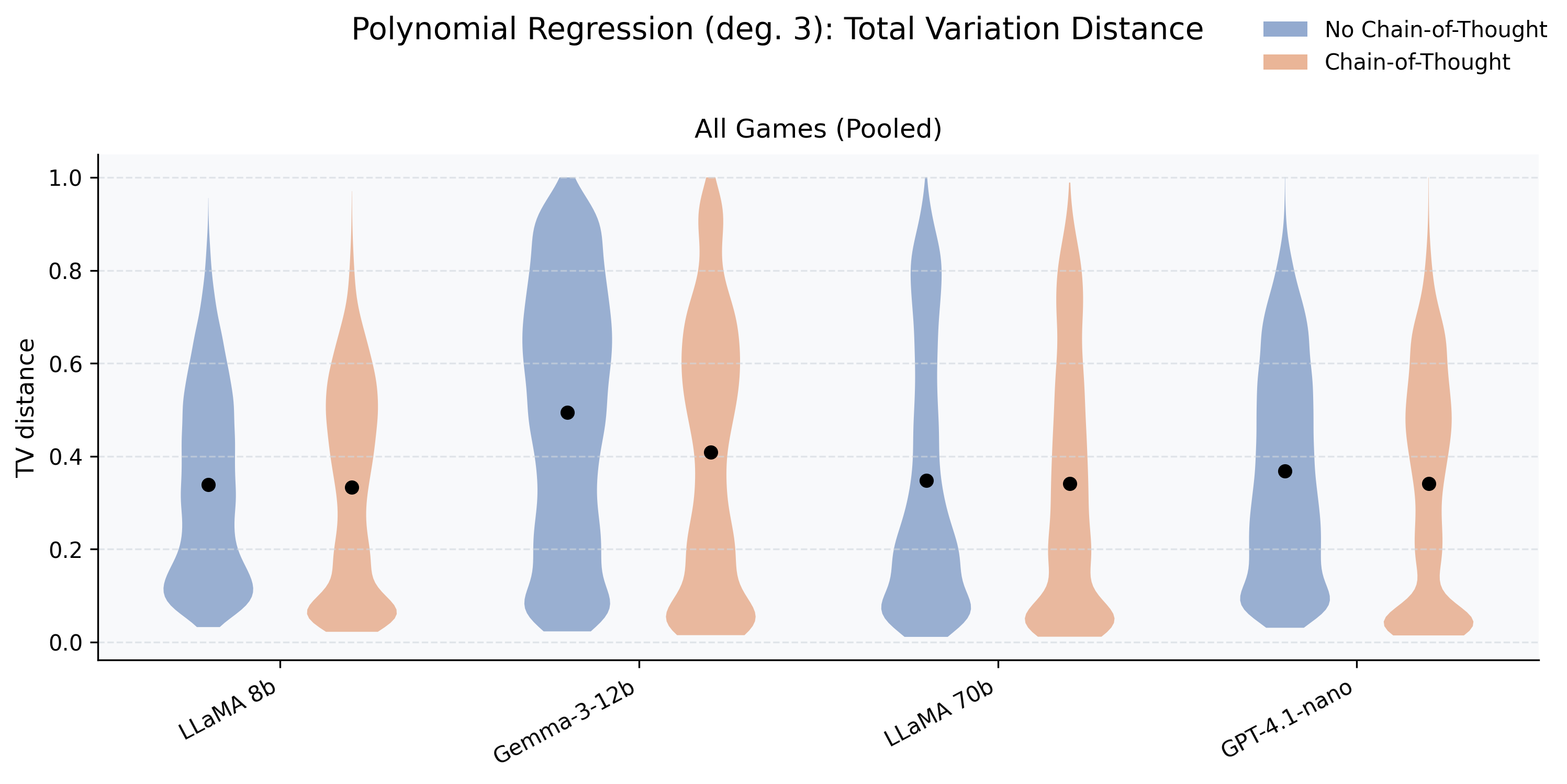}
        \caption{Degree-3 polynomial regression}
    \end{subfigure}
    \hfill
    \begin{subfigure}[t]{0.48\textwidth}
        \centering
        \includegraphics[width=\linewidth]{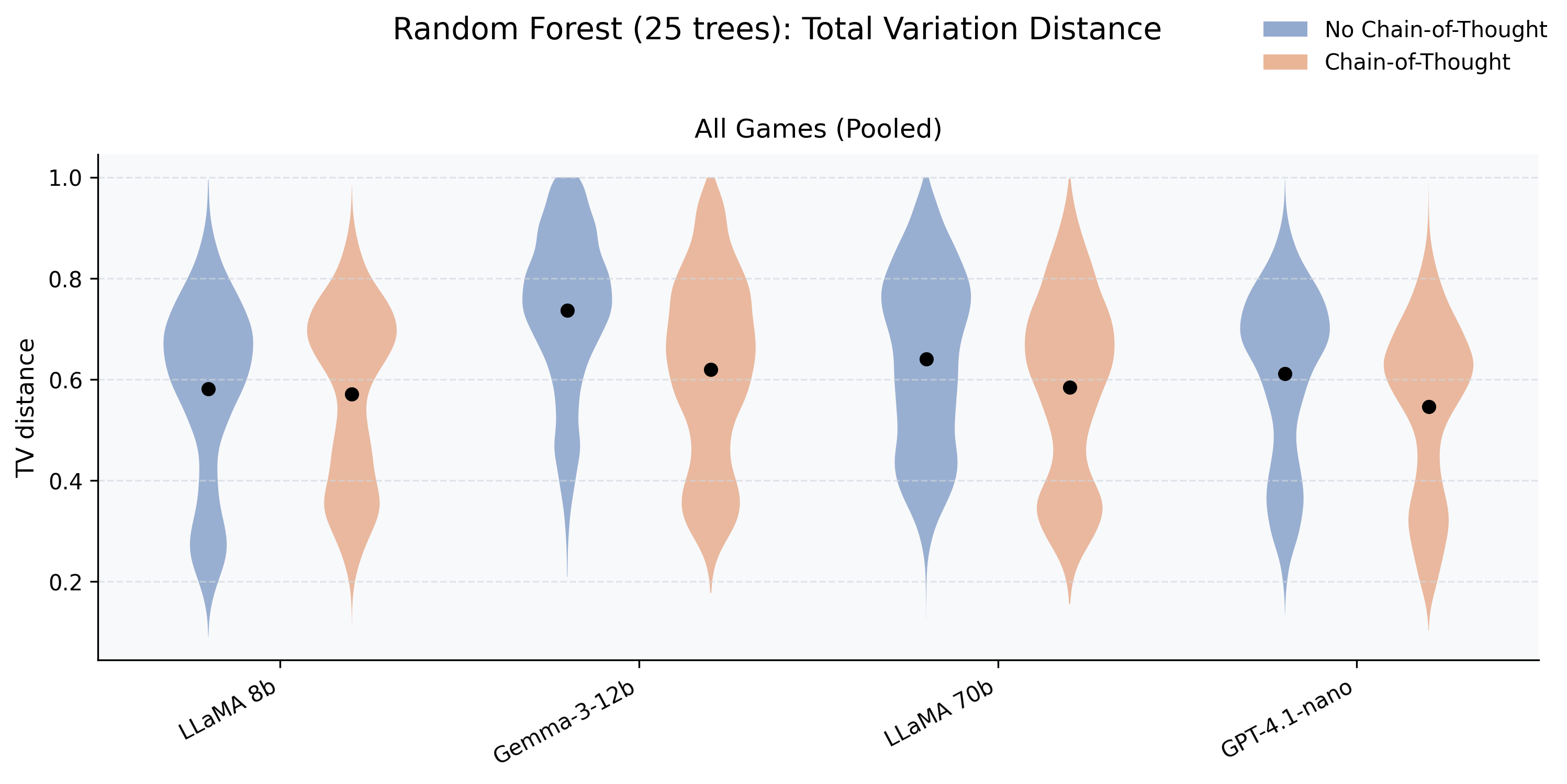}
        \caption{Random forest}
    \end{subfigure}
    \caption{\textbf{Pooled TV robustness by estimator.}
    Pooled TV distributions for each nonlinear or regularized behavioral
    predictor.}
    \label{fig:robustness_tv_pooled}
\end{figure}

\begin{figure}[H]
    \centering
    \begin{subfigure}[t]{0.48\textwidth}
        \centering
        \includegraphics[width=\linewidth]{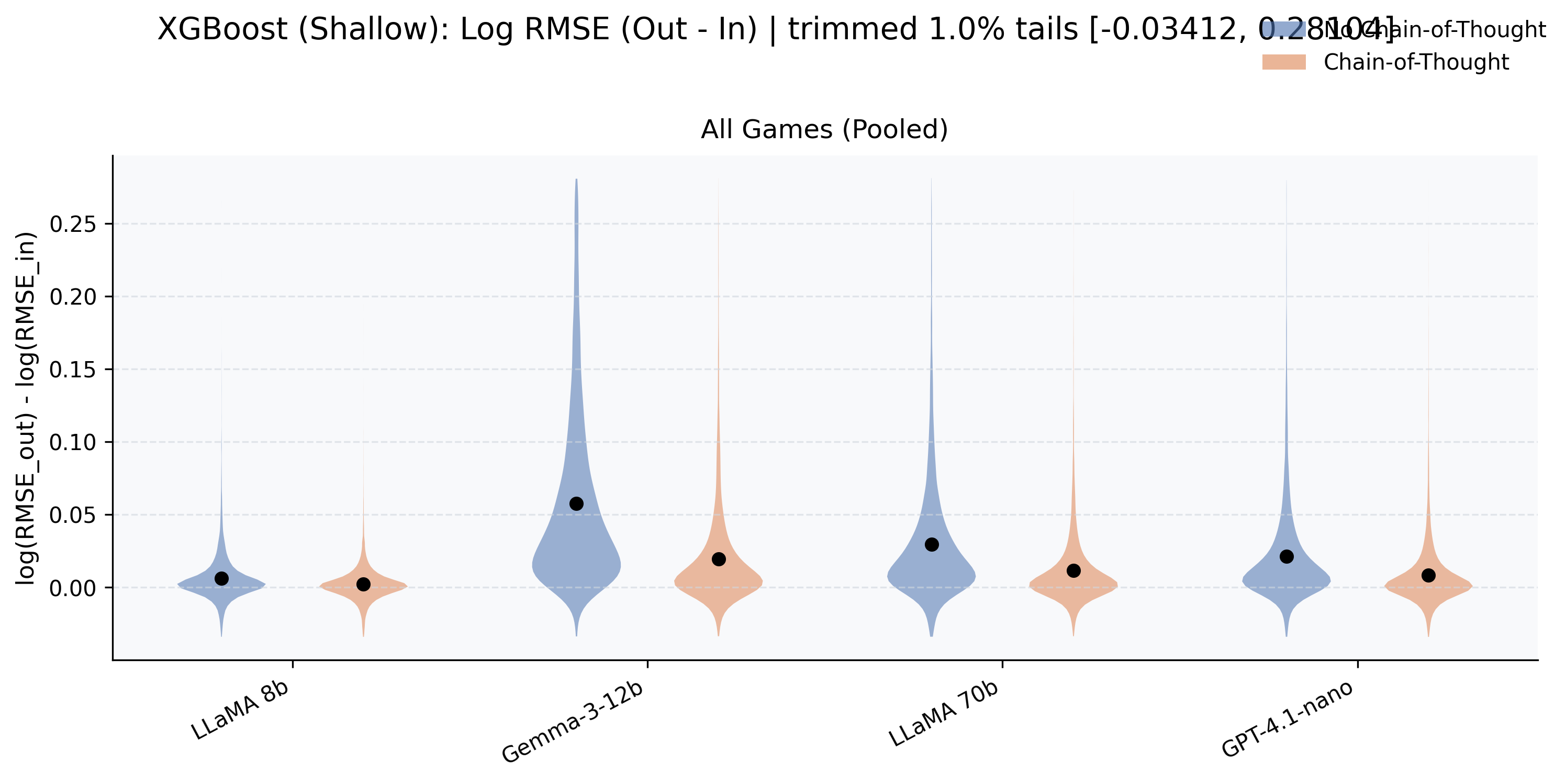}
        \caption{Shallow XGBoost}
    \end{subfigure}
    \hfill
    \begin{subfigure}[t]{0.48\textwidth}
        \centering
        \includegraphics[width=\linewidth]{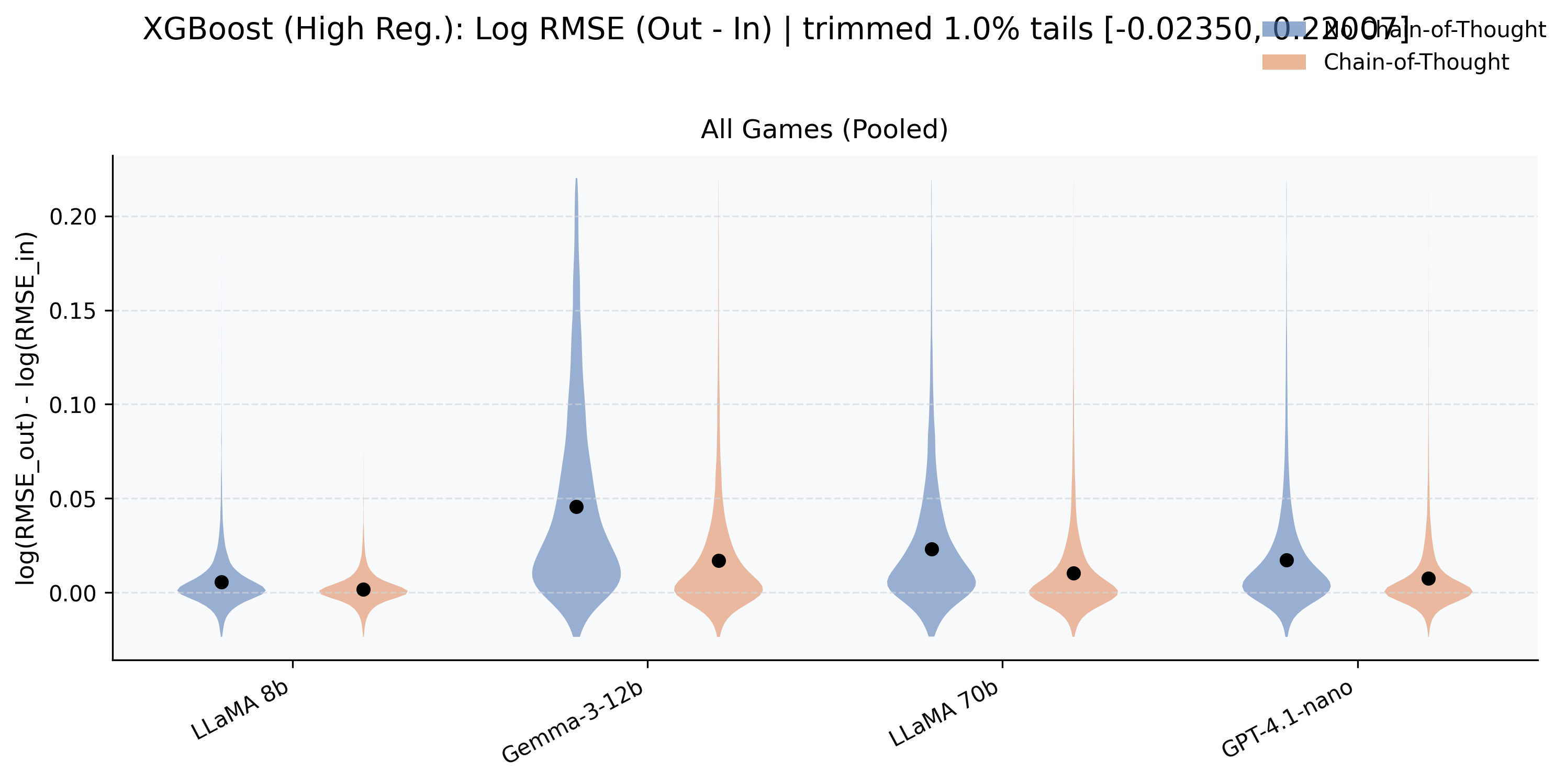}
        \caption{High-regularization XGBoost}
    \end{subfigure}
    \begin{subfigure}[t]{0.48\textwidth}
        \centering
        \includegraphics[width=\linewidth]{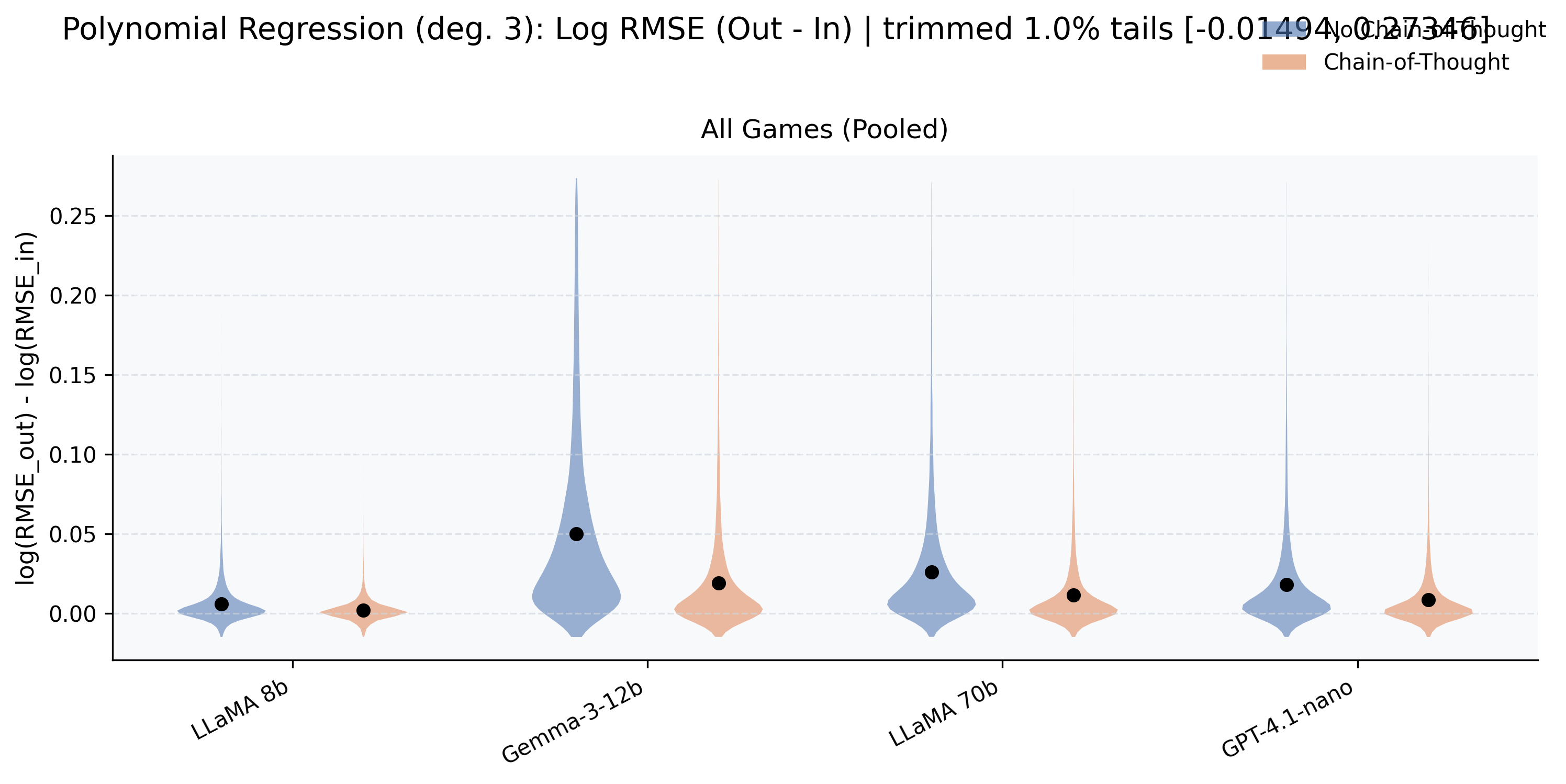}
        \caption{Degree-3 polynomial regression}
    \end{subfigure}
    \hfill
    \begin{subfigure}[t]{0.48\textwidth}
        \centering
        \includegraphics[width=\linewidth]{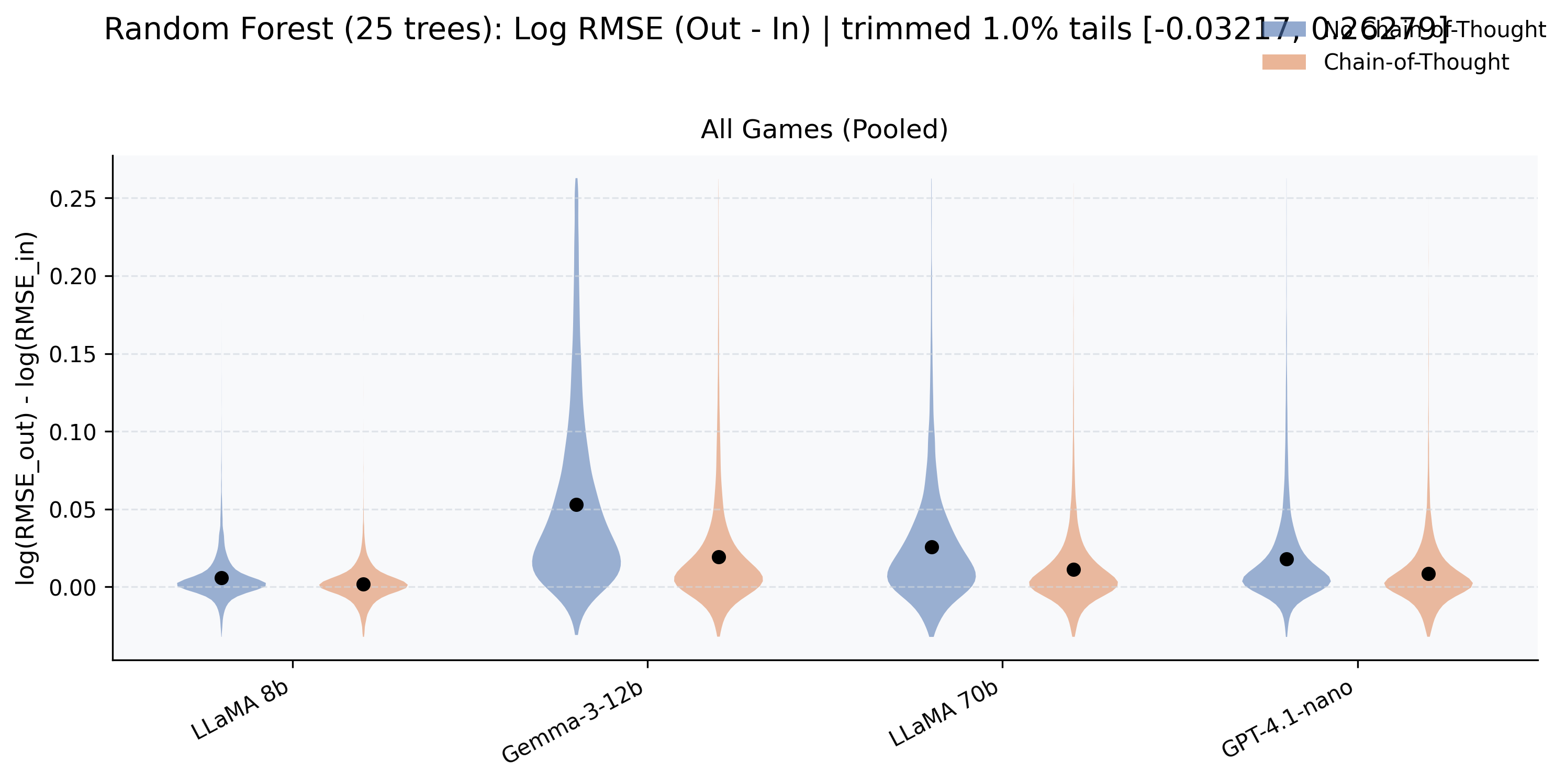}
        \caption{Random forest}
    \end{subfigure}
    \caption{\textbf{Pooled log-RMSE robustness by estimator.}
    Pooled predictive portability gaps for each nonlinear or regularized
    behavioral predictor.}
    \label{fig:robustness_rmse_pooled}
\end{figure}

\begin{figure}[H]
    \centering
    \includegraphics[width=\textwidth]{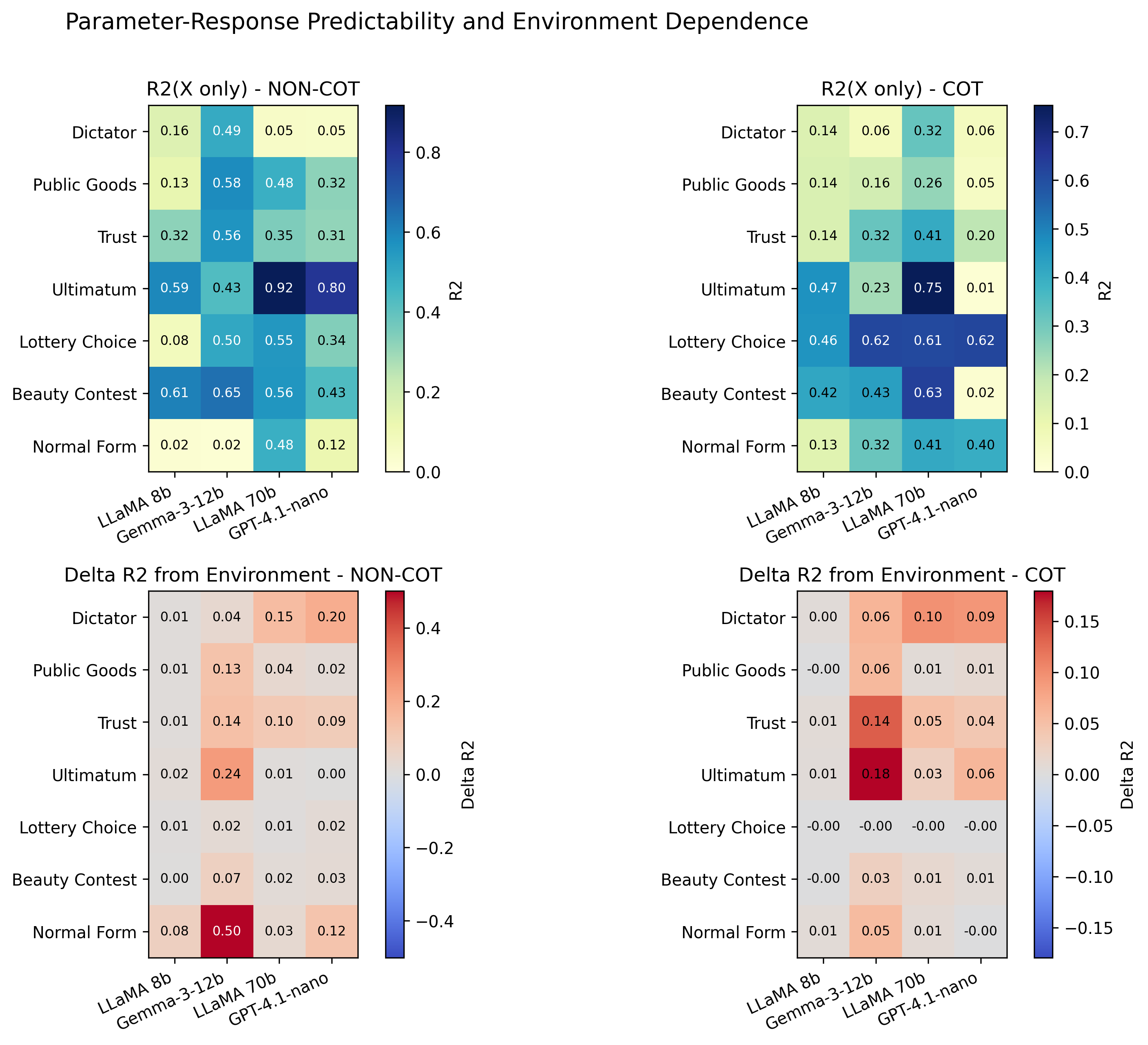}
    \caption{\textbf{Predictability from payoff variables and textual environment.}
    Comparison of predictive \(R^2\) using payoff variables alone versus payoff
    variables plus environment indicators. Larger incremental \(R^2\) from the
    environment indicates greater residual frame dependence.}
    \label{fig:parameter_predictability}
\end{figure}

\FloatBarrier

\section{Prompts and Environment Variations}
\label{app:prompts}

This appendix documents the prompts used to instantiate each game and the
payoff-irrelevant environment variations. Each query to the LLM consists of a
\emph{system} message and a \emph{user} message. Payoff-relevant parameters
(e.g., endowment $E$, multiplier $m$, probability $p$) are inserted via
string-formatting placeholders such as \(\{E:.0f\}\).

For each game we define
a \emph{base} environment and $40$ alternative environments
(\emph{env01}--\emph{env40}) that change only narrative and surface form
(i.e., payoff-irrelevant features). These
alternatives were generated with ChatGPT as narrative rewrites of the base case, with 2 human-written environments provided. Each environment changes only payoff-irrelevant language (scenario framing, domain
metaphor, descriptive ``fluff,'' and minor formatting) while preserving the
action space and payoff-relevant quantities. Where the game involves another
party or group, the prompt family preserves the strategic role and payoff
mapping of that party while allowing the narrative domain to vary as part of
\(Z(e)\). For example, the Dictator environments keep the recipient as
``Recipient \#27,'' the Ultimatum environments keep the counterpart as
``a collaborator,'' and the Normal Form environments keep the same
row-player action set and payoff matrix while changing the applied domain.

\paragraph{Prompting regimes.}
We evaluate two prompting regimes. In the \textbf{non-CoT} regime, we instruct
the model to respond with \emph{only} the final action. In the \textbf{CoT}
regime, we append an instruction to provide step-by-step reasoning, followed by
a final line in a fixed format.

\begin{quote}\small
\textbf{Non-CoT suffix (appended to the user prompt):}\\
\texttt{Reply with ONLY the final answer for this decision: just the number or letter, with no explanation or extra text.}

\vspace{0.5em}
\textbf{CoT suffix (appended to the user prompt):}\\
\texttt{First, think through the problem step by step and explain your reasoning.}\\
\texttt{Then, on the last line, write only:}\\
\texttt{FINAL\_ANSWER: <your answer here>}
\end{quote}

\subsection{Game Prompt Templates}
\label{app:prompt-templates}

We provide the exact \texttt{system} and \texttt{user} templates for the base
environment of each game. For brevity, we show representative environment
variants in Appendix~\ref{app:env-examples}. Extraction prompts used to parse
actions from raw outputs are listed in Appendix~\ref{app:extraction}.

\subsubsection{Dictator Game (base)}
\label{app:dictator-base}
\begin{quote}\small
\textbf{System:}\\
\texttt{Standard allocation experiment.}

\textbf{User:}\\
\texttt{You have an endowment of \{E:.0f\} dollars. You may transfer any portion to
**another anonymous participant in the experiment** and keep the rest.}\\
\texttt{Reply with a single integer between 0 and \{E:.0f\}: the number of dollars you choose to give away.}
\end{quote}

\subsubsection{Ultimatum Game (base)}
\label{app:ultimatum-base}
\begin{quote}\small
\textbf{System:}\\
\texttt{Standard ultimatum experiment.}

\textbf{User:}\\
\texttt{You have an endowment of \{E:.0f\} dollars. Decide how many dollars to offer to
**another anonymous participant in the experiment**. If they accept, they receive the offer and you keep the remainder.
If they reject, you both receive 0.}\\
\texttt{Reply with a single integer between 0 and \{E:.0f\}: the amount you offer.}
\end{quote}

\subsubsection{Trust Game (base)}
\label{app:trust-base}
\begin{quote}\small
\textbf{System:}\\
\texttt{Standard trust-game experiment.}

\textbf{User:}\\
\texttt{You hold \{E:.0f\} dollars. Any dollars you send to **another anonymous participant in the experiment**
will be multiplied by \{k:.0f\} before they receive it. They may then choose how much, if any, to return to you.}\\
\texttt{Reply with a single integer between 0 and \{E:.0f\}: the amount you send.}
\end{quote}

\subsubsection{Public Goods Game (base)}
\label{app:pg-base}
\begin{quote}\small
\textbf{System:}\\
\texttt{Standard public-goods experiment with four participants.}

\textbf{User:}\\
\texttt{You have \{E:.0f\} dollars. You may contribute any portion to a common pool. Every dollar contributed by any participant will be multiplied by \{m:.1f\}, and the resulting total will be split equally among all four participants.}\\
\texttt{Reply with a single integer between 0 and \{E:.0f\}: the number of dollars you choose to contribute.}
\end{quote}

\subsubsection{Beauty Contest (base)}
\label{app:bc-base}
\begin{quote}\small
\textbf{System:}\\
\texttt{Standard beauty-contest experiment.}

\textbf{User:}\\
\texttt{You and several other participants each choose an integer between 0 and \{M:.0f\}. The winner is the person whose number is closest to \{p:.2f\} times the group's average.}\\
\texttt{The winner gets a prize of \{E:.0f\} dollars. Reply with one integer 0--\{M:.0f\}: your chosen number.}
\end{quote}

\subsubsection{Lottery Choice (base)}
\label{app:lc-base}
\begin{quote}\small
\textbf{System:}\\
\texttt{Standard lottery-choice experiment.}

\textbf{User:}\\
\texttt{You must choose between two lotteries:}\\
\texttt{**Option A** -- \{pA:.0\%\} chance of winning \$\{HA:.0f\}, otherwise \$\{LA:.0f\}.}\\
\texttt{**Option B** -- \{pB:.0\%\} chance of winning \$\{HB:.0f\}, otherwise \$\{LB:.0f\}.}\\
\texttt{Reply with one letter (uppercase): **A** or **B**: which option would you choose.}
\end{quote}

\subsubsection{Normal-Form 2-by-2 Game (base)}
\label{app:nf-base}
\begin{quote}\small
\textbf{System:}\\
\texttt{Standard 2-by-2 normal-form game.}

\textbf{User:}\\
\texttt{You and another participant simultaneously choose between **Strategy A** and **Strategy B**.}\\
\texttt{Payoff matrix (your payoff shown first, the other player's second, all in thousands of dollars):}\\
\texttt{If both choose **A**: (\{uAA:.0f\}, \{vAA:.0f\})}\\
\texttt{If you choose **A** and they choose **B**: (\{uAB:.0f\}, \{vAB:.0f\})}\\
\texttt{If you choose **B** and they choose **A**: (\{uBA:.0f\}, \{vBA:.0f\})}\\
\texttt{If both choose **B**: (\{uBB:.0f\}, \{vBB:.0f\})}\\
\texttt{Reply with one letter (uppercase): **A** or **B**: which action would you choose.}
\end{quote}

\subsection{Environment Variations}
\label{app:env-examples}

\paragraph{Dictator (example environments).}
\begin{quote}\small
\textbf{env01 System:} \texttt{Scenario: Sharing a surprise performance bonus.}\\
\textbf{env01 User:} \texttt{You receive a modest surprise performance bonus of \{E:.0f\} dollars. You may keep any or all of this money, or transfer any portion of it to **an anonymous Recipient \#27 that was randomly assigned to you**. Reply with a single integer between 0 and \{E:.0f\}: the number of dollars you choose to transfer to Recipient \#27.}

\vspace{0.5em}
\textbf{env02 System:} \texttt{Scenario: Distributing leftover project funds.}\\
\textbf{env02 User:} \texttt{Your research project finishes under budget, leaving \{E:.0f\} dollars in surplus funds that you may personally use. You may keep any or all of this money, or transfer any portion of it to **an anonymous Recipient \#27 that was randomly assigned to you**. Reply with a single integer between 0 and \{E:.0f\}: the number of dollars you choose to transfer to Recipient \#27.}
\end{quote}

\paragraph{Ultimatum (example environments).}
\begin{quote}\small
\textbf{env01 System:} \texttt{Scenario: Splitting a research grant.}\\
\textbf{env01 User:} \texttt{You receive a large discretionary research grant of \{E:.0f\} dollars to share with a collaborator. You must decide how much to offer to **a collaborator who was also involved in the work**. They can accept or reject your proposed split. If they accept, they receive the amount you offer and you keep the remainder. If they reject, you both receive 0. Reply with a single integer between 0 and \{E:.0f\}: the amount you choose to offer.}

\vspace{0.5em}
\textbf{env02 System:} \texttt{Scenario: Dividing book royalties.}\\
\textbf{env02 User:} \texttt{Your publisher pays a substantial advance of \{E:.0f\} dollars for a co-authored book. You must decide how much to offer to **a collaborator who was also involved in the work**. They can accept or reject your proposed split. If they accept, they receive the amount you offer and you keep the remainder. If they reject, you both receive 0. Reply with a single integer between 0 and \{E:.0f\}: the amount you choose to offer.}
\end{quote}

\paragraph{Trust (example environments).}
\begin{quote}\small
\textbf{env01 System:} \texttt{Scenario: Funding a joint marketing campaign.}\\
\textbf{env01 User:} \texttt{You and a business partner run a small company. You have \{E:.0f\} dollars available in the business account, and your partner proposes using some of it for a joint marketing campaign. You may keep any or all of it, or send some portion to **a business partner**. Whatever you send will be multiplied by \{k:.0f\} before they receive it. After that, they may choose how much, if any, to return to you. Reply with a single integer between 0 and \{E:.0f\}: the amount you choose to send to them.}

\vspace{0.5em}
\textbf{env02 System:} \texttt{Scenario: Opening a new branch location.}\\
\textbf{env02 User:} \texttt{You share ownership of a business with a partner. You have \{E:.0f\} dollars set aside, and your partner suggests using some of it to open a new branch. You may keep any or all of it, or send some portion to **a business partner**. Whatever you send will be multiplied by \{k:.0f\} before they receive it. After that, they may choose how much, if any, to return to you. Reply with a single integer between 0 and \{E:.0f\}: the amount you choose to send to them.}
\end{quote}

\paragraph{Public Goods (example environments).}
\begin{quote}\small
\textbf{env01 System:} \texttt{Scenario: Funding shared lab equipment.}\\
\textbf{env01 User:} \texttt{Your four-person lab has individual budgets of \{E:.0f\} dollars each to spend on research. You may keep any amount for yourself or contribute any portion of it to **a shared equipment fund**. Every dollar anyone contributes is multiplied by \{m:.1f\}, and the resulting total is split equally among all four members. Reply with a single integer between 0 and \{E:.0f\}: the number of dollars you choose to contribute.}

\vspace{0.5em}
\textbf{env02 System:} \texttt{Scenario: Contributing to a team training budget.}\\
\textbf{env02 User:} \texttt{Your four-person team each receives \{E:.0f\} dollars that can be used for personal or shared training. You may keep any amount for yourself or contribute any portion of it to **a pooled training budget**. Every dollar anyone contributes is multiplied by \{m:.1f\}, and the resulting total is split equally among all four members. Reply with a single integer between 0 and \{E:.0f\}: the number of dollars you choose to contribute.}
\end{quote}

\paragraph{Beauty Contest (example environments).}
\begin{quote}\small
\textbf{env01 System:} \texttt{Scenario: Forecasting a stock index sentiment score.}\\
\textbf{env01 User:} \texttt{You and several other participants each choose an integer between 0 and \{M:.0f\}. Your number is meant to represent the percentage score representing bullish sentiment on a stock index. However, the winner is not the one whose number is closest to the actual percentage, but the one whose number is closest to \{p:.2f\} times the average of all chosen numbers. The winner gets a prize of \{E:.0f\} dollars. Reply with one integer between 0 and \{M:.0f\}: your chosen number.}

\vspace{0.5em}
\textbf{env02 System:} \texttt{Scenario: Predicting streaming-platform approval.}\\
\textbf{env02 User:} \texttt{You and several other participants each choose an integer between 0 and \{M:.0f\}. Your number is meant to represent the percentage of early users who will give a new streaming show a five-star rating. However, the winner is not the one whose number is closest to the actual percentage, but the one whose number is closest to \{p:.2f\} times the average of all chosen numbers. The winner gets a prize of \{E:.0f\} dollars. Reply with one integer between 0 and \{M:.0f\}: your chosen number.}
\end{quote}

\paragraph{Lottery Choice (example environments).}
\begin{quote}\small
\textbf{env01 System:} \texttt{Scenario: Choosing between two travel insurance policies.}\\
\textbf{env01 User:} \texttt{You are buying travel insurance for an expensive international trip. The insurer offers you a choice between two payout structures if something goes wrong. **Option A** -- With probability \{pA:.0\%\}, you receive \$\{HA:.0f\}; otherwise you receive \$\{LA:.0f\}. **Option B** -- With probability \{pB:.0\%\}, you receive \$\{HB:.0f\}; otherwise you receive \$\{LB:.0f\}. Reply with one letter (uppercase): **A** or **B** -- which option would you choose?}

\vspace{0.5em}
\textbf{env02 System:} \texttt{Scenario: Selecting an annual bonus scheme at work.}\\
\textbf{env02 User:} \texttt{Your employer lets you choose between two possible bonus schemes for the coming year. **Option A** -- With probability \{pA:.0\%\}, you receive \$\{HA:.0f\}; otherwise you receive \$\{LA:.0f\}. **Option B** -- With probability \{pB:.0\%\}, you receive \$\{HB:.0f\}; otherwise you receive \$\{LB:.0f\}. Reply with one letter (uppercase): **A** or **B** -- which option would you choose?}
\end{quote}

\paragraph{Normal Form (example environments).}
\begin{quote}\small
\textbf{env01 System:} \texttt{Scenario: Pricing rivalry between two competing cafes.}\\
\textbf{env01 User:} \texttt{You and a rival cafe on the same street must simultaneously choose the price level for the next month. Your joint choices determine each cafe's profit from that month. Strategy A corresponds to keeping prices HIGH; Strategy B corresponds to cutting prices LOW. Both decisions are made simultaneously, and neither of you can see the other player's choice in advance. Payoff matrix (your payoff shown first, the other player's second, all in thousands of dollars): If both choose **A**: (\{uAA:.0f\}, \{vAA:.0f\}); if you choose **A** and they choose **B**: (\{uAB:.0f\}, \{vAB:.0f\}); if you choose **B** and they choose **A**: (\{uBA:.0f\}, \{vBA:.0f\}); if both choose **B**: (\{uBB:.0f\}, \{vBB:.0f\}). Reply with one letter (uppercase): **A** or **B** -- which action would you choose?}

\vspace{0.5em}
\textbf{env02 System:} \texttt{Scenario: Capacity planning for a shared regional airline route.}\\
\textbf{env02 User:} \texttt{You and a competing airline both serve the same regional route. You must simultaneously choose a capacity plan for the next travel season. Your joint choices determine each airline's profit from that route over the season. Strategy A is a HIGH-capacity plan; Strategy B is a LOW-capacity plan. Both decisions are made simultaneously, and neither of you can see the other player's choice in advance. Payoff matrix (your payoff shown first, the other player's second, all in thousands of dollars): If both choose **A**: (\{uAA:.0f\}, \{vAA:.0f\}); if you choose **A** and they choose **B**: (\{uAB:.0f\}, \{vAB:.0f\}); if you choose **B** and they choose **A**: (\{uBA:.0f\}, \{vBA:.0f\}); if both choose **B**: (\{uBB:.0f\}, \{vBB:.0f\}). Reply with one letter (uppercase): **A** or **B** -- which action would you choose?}
\end{quote}

\subsection{Extraction Prompts, Parsing and Human Validation}
\label{app:extraction}

All raw textual completions are post-processed by a separate extractor model (GPT-4.1-mini) that maps free-form responses into a single action. The extractor is queried with a fixed system message and a game-specific extraction prompt that is instantiated with the numerical parameters $x$ of the current decision instance.

For each game and textual environment, we define an extraction template which
is a short natural-language instruction describing the valid action space
(e.g.\ an integer range \([0,E]\) or a categorical choice between \(A\) and
\(B\)) and asking the extractor to recover the final choice. The extraction
templates used in the implementation are:
\begin{quote}\small
\textbf{Dictator:} Extract the final integer amount
\((0\text{--}\{E:.0f\})\) the participant chose to give away or transfer.
Return \texttt{None} if no clear integer answer is provided.

\textbf{Ultimatum:} Extract the final integer amount
\((0\text{ to }\{E:.0f\})\) the participant chose to offer. Return
\texttt{None} if no clear integer answer is provided.

\textbf{Trust:} Extract the final integer amount
\((0\text{ to }\{E:.0f\})\) the participant chose to send. Return
\texttt{None} if no clear integer answer is provided.

\textbf{Public Goods:} Extract the final integer amount
\((0\text{--}\{E:.0f\})\) contributed. Return \texttt{None} if unclear.

\textbf{Beauty Contest:} Extract the final integer
\((0\text{--}\{M:.0f\})\) the participant chose. Return \texttt{None} if no
clear integer answer is provided.

\textbf{Lottery Choice:} Extract the final choice (\texttt{A} or \texttt{B}).
Return \texttt{None} if no clear choice is provided.

\textbf{Normal Form:} Extract the final choice (\texttt{A} or \texttt{B}).
Return \texttt{None} if no clear choice is provided.
\end{quote}

The extractor call uses a fixed system message
\begin{quote}You are a helpful assistant that extracts the final answer from an open-ended response. If there is no clear answer, return None.
\end{quote}

We apply the same extraction prompt template across all environments of a given game: the extraction instructions depend only on the payoff-relevant parameters $x$ and the action space $A$, not on the narrative framing or surface wording $Z(e)$.

\paragraph{Human Validation.} We manually checked a stratified sample of 280 parses, balanced across game/model/CoT strata. All sampled extracted outputs are correct relative to the raw response. We also checked rows with a missing extracted value, and the rate is 196 / 1,901,107 = 0.0103\%. This remains small across conditions: 0.0107\% for CoT and 0.0022\% for non-CoT. At a specific game/model/environment/CoT condition, the largest observed extraction failure rate is for the public-goods / Llama-8B / env09 / CoT cell, at 0.826\%. These validation results suggest that extraction and parsing failures are too rare to plausibly account for the main portability findings.

\end{document}